\newcommand{\ie}{\textit{i}.\textit{e}.}
\newtheorem{theorem}{Theorem}[section] 
\title{Revisiting Logit Distributions for Reliable Out-of-Distribution Detection}
\author{%
  Jiachen Liang$^{1,2}$\And
  Ruibing Hou$^{1}$\thanks{Corresponding author}\And
  Minyang Hu$^{1,2}$\And  
  Hong Chang$^{1,2}$\And
  Shiguang Shan$^{1,2}$\And
  Xilin Chen$^{1,2}$\\
  $^1$  State Key Laboratory of AI Safety, Institute of Computing Technology,  CAS, China \\
  $^2$  University of Chinese Academy of Sciences (CAS), China \\
  \{jiachen.liang, minyang.hu\}@vipl.ict.ac.cn,
  \{houruibing, changhong, sgshan, xlchen\}@ict.ac.cn
}
\begin{document}

\maketitle

\begin{abstract}
Out-of-distribution (OOD) detection is critical for ensuring the reliability of deep learning models in open-world applications. While post-hoc methods are favored for their efficiency and ease of deployment, existing approaches often 
underexploit the rich information embedded in the model’s logits space. 
In this paper, we propose LogitGap, a novel post-hoc OOD detection method that explicitly exploits the relationship between the maximum logit and the remaining logits to enhance the separability between in-distribution (ID) and OOD samples. 
To further improve its effectiveness, we refine LogitGap by focusing on a more compact and informative subset of the logit space.
Specifically, we introduce a training-free strategy that automatically identifies the most informative logits for scoring. 
We provide both theoretical analysis and empirical evidence to validate the effectiveness of our approach. 
Extensive experiments on both vision-language and vision-only models demonstrate that LogitGap consistently achieves state-of-the-art performance across diverse OOD detection scenarios and benchmarks.
Code is available at \url{https://github.com/GIT-LJc/LogitGap}.
\end{abstract}

\section{Introduction}
Deep learning models have demonstrated remarkable success across various computer vision tasks, typically under the closed-set assumption.
However, this assumption often fails to hold in real-world applications such as autonomous driving, medical imaging, and access control systems. 
In open-world scenarios, deployed models are inevitably exposed to out-of-distribution (OOD) samples, which can result in unreliable predictions, thereby introducing significant risks to the safety and reliability of the system.
To mitigate these risks, OOD detection has been proposed to detect and reject such OOD inputs before making decisions.

In recent years, the post-hoc OOD detection methods have attracted significant attention.
These methods directly operate on pre-trained models without modifying their parameters \cite{energy,maxlogit,maxcosine,mahananobis,nci,knn,msp}, offering high deployment flexibility and computational efficiency.
A core problem for the post-hoc methods is how to design a scoring function that effectively maximize the separability between ID and OOD samples. 
Two representative scoring functions are Maximum Logit (MaxLogit) \cite{maxlogit} and Maximum Concept Matching (MCM\footnote{Although MCM \cite{MCM} and Max Softmax Probablity (MSP \cite{msp}) target different objectives, their methods for computing OOD scores are equivalent. Hence, we analyze only one in detail.}\cite{MCM}). 
MaxLogit simply uses the largest logit value as the OOD score, which completely disregards information from the remaining logits.
In contrast, MCM applies a softmax function into the logits and then takes the maximum softmax probability as the OOD score.
By applying a softmax function, MCM significantly reducing the overlap between the score distributions of ID and OOD samples (Figure \ref{fig:minmaxbar}). 
However, the softmax function used in MCM ignores the absolute magnitudes of original logits, resulting in information loss.
Consequently, different logit patterns may collapse into similar probability distributions, limiting the effectiveness of OOD detection.
A natural question arises: \emph{How can we more effectively leverage the discriminative information embedded in non-maximum logits to enhance ID-OOD separability?}

To answer the above question, we first systematically examine the logit distributions of 
ID and OOD samples. 
We observe a consistent phenomenon: the relationship between the maximum logit and the remaining logits displays fundamentally different characteristics for ID and OOD data. 
As shown in Figure \ref{fig:sorted_logits}, ID samples generally have higher maximum logits accompanied by lower non-maximum logits compared to OOD samples. 
This results in a pronounced ``logit gap", defined as the numerical difference between the maximum and the remaining logits, which is consistently and significantly larger for ID samples than for OOD samples.  
This observed pattern suggests that the relative \textit{logit gap} can naturally serve as a discriminative criteria for effective OOD detection.

\begin{figure}[tbp]
    \centering
    \begin{subfigure}[b]{0.31\textwidth}
        \centering
        \includegraphics[width=\textwidth]{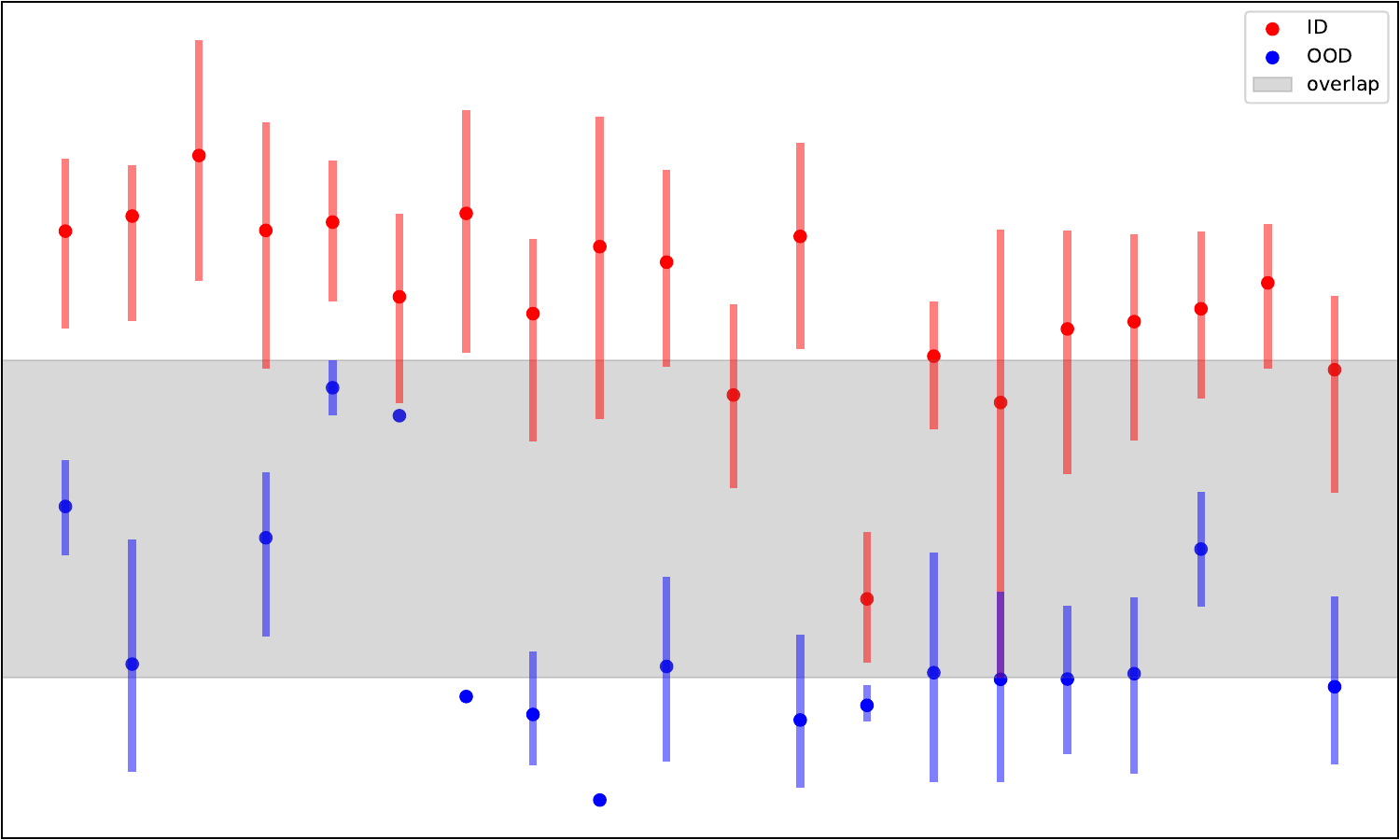}
        \caption{MaxLogit}
        \label{fig:minmaxbar_mcm}
    \end{subfigure}
    \hfill
    \begin{subfigure}[b]{0.31\textwidth}
        \centering
        \includegraphics[width=\textwidth]{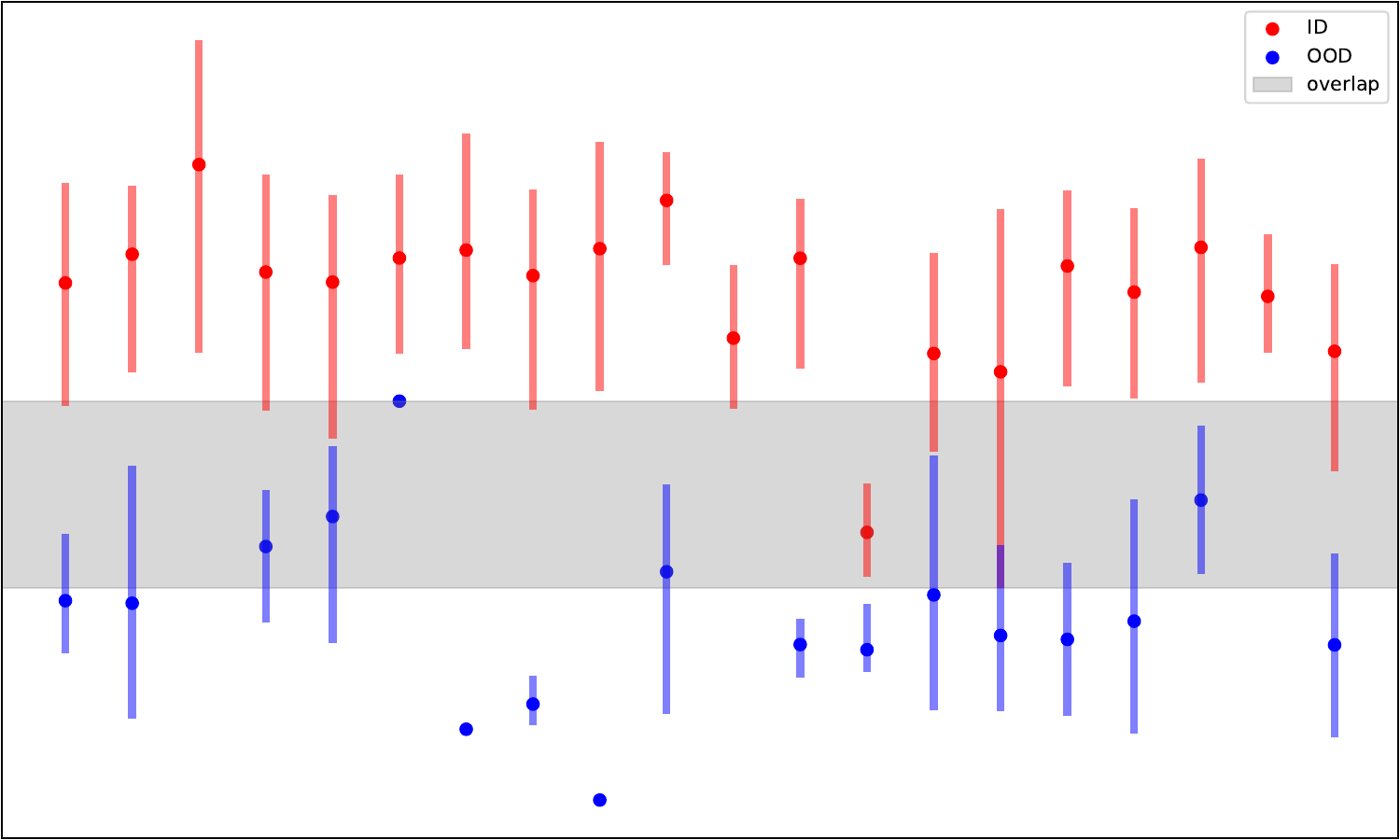}
        \caption{MCM}
        \label{fig:minmaxbar_mls}
    \end{subfigure}
    \hfill
    \begin{subfigure}[b]{0.31\textwidth}
        \centering
        \includegraphics[width=\textwidth]{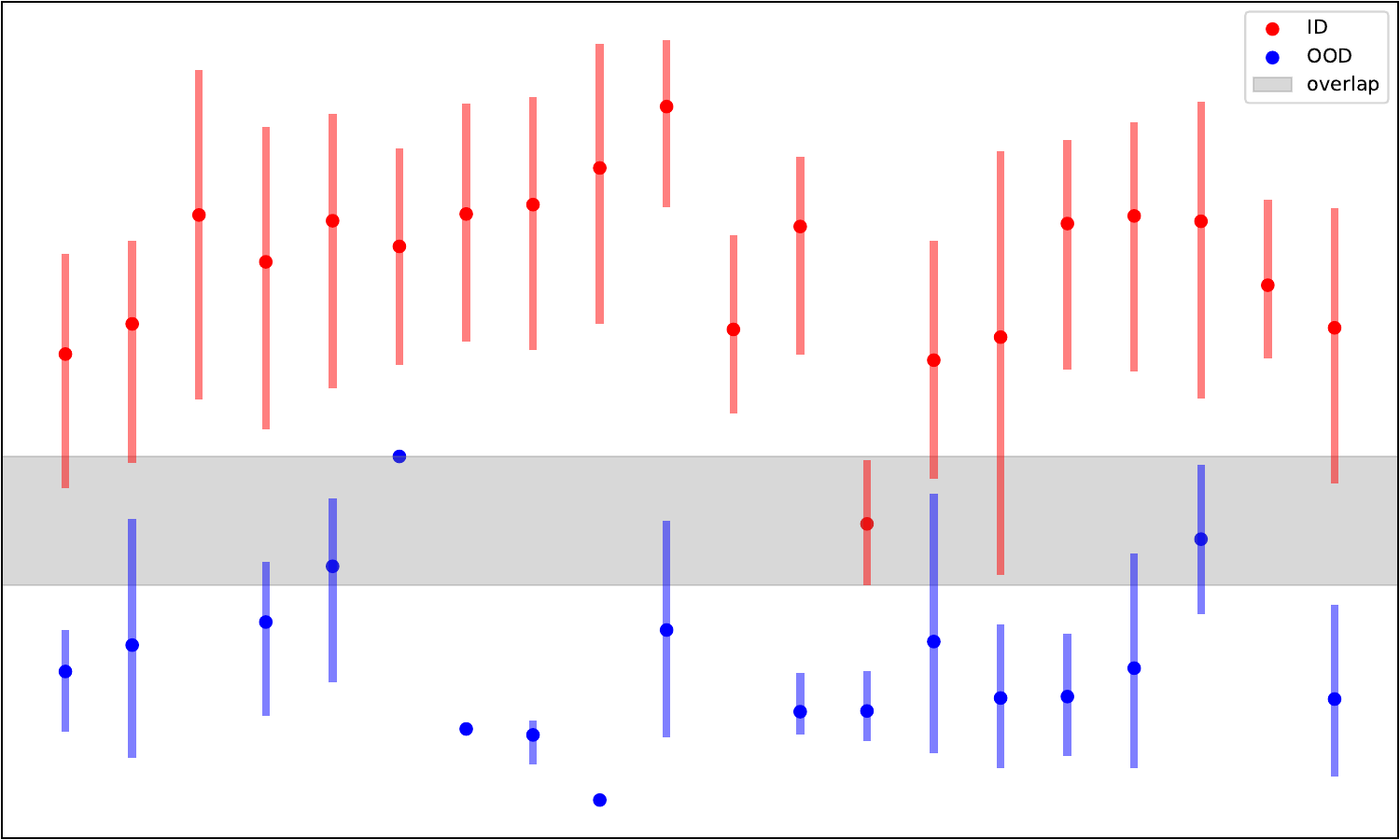}
        \caption{LogitGap}
        \label{fig:minmaxbar_logitsgap}
    \end{subfigure}
    \vspace{-0.1cm}
    \caption{OOD score distributions for ID and OOD samples across scoring functions. The x-axis shows predicted class indices. Dots represent mean scores per class; vertical lines indicate score ranges. Gray shading marks overlap regions of ID and OOD scores. Results are reported using CLIP ViT-B/16 model on ImageNet-20 (ID) and ImageNet-10 (OOD).}
    \vspace{-0.5cm}
    \label{fig:minmaxbar}
\end{figure}

Motivated by this observation, we propose \textbf{LogitGap}, a novel post-hoc OOD detection approach that effectively leverages the logit-gap to distinguish between ID and OOD samples. 
By explicitly leveraging the entire logit space, our proposed method significantly enhances the separability between ID and OOD samples (as shown in Figure \ref{fig:minmaxbar}).
To further improve its effectiveness, we refine LogitGap by focusing on a more compact and informative subset of the logit space.
This improvement is inspired by our empirical observation: the tail region (logits from least likely classes) exhibits substantial overlap between ID and OOD samples, which brings irrelevant and noisy signal.
To mitigate this issue, we constrain LogitGap’s computation to the top-$N$ logits, naturally excluding the noisy tail logits from the OOD score. To balance information retention (larger $N$) and noise suppression (smaller $N$), we propose a simple yet effective strategy for selecting $N$: choose the value that maximizes the difference between the mean top-$N$ logits of ID and OOD samples, using only a small number of ID samples. 

We conduct experiments across diverse OOD scenarios to evaluate the effectiveness of our LogitGap method. 
Results show that LogitGap outperforms existing OOD scoring functions in both zero-shot and few-shot OOD detection tasks based on the CLIP model.
Furthermore, when combined with training-based OOD detection methods, LogitGap brings notable performance gains, highlighting its complementary nature.
To assess the generality of our approach, we also evaluate it under traditional OOD settings, where models are trained from scratch using cross-entropy loss on ID data.
Across all scenarios, LogitGap consistently achieves significant performance gains, validating its versatility and robustness in various OOD detection scenarios.

Our main contributions can be summarized as follows:
\begin{itemize}
    \item We propose a novel post-hoc OOD detection method for a pre-trained model, which leverages the information from entire logit space to enhance ID-OOD separability.
    \item We analyze the limitations of existing OOD scoring methods and theoretically derive their relationship to our LogitGap method.
    \item We conduct experiments to demonstrate the versatility and effectiveness of proposed LogitGap method in various OOD scenarios.
\end{itemize}

\section{Related Work}
\textbf{OOD Detection for Models Trained with Cross-Entropy Loss.} \ 
OOD detection for classification models trained with cross-entropy loss typically falls into two categories. The first involves training-time strategies that endow models with OOD awareness, often requiring access to training data \cite{godin,vos,oe}. For example, VOS \cite{vos} synthesizes virtual outliers, LogitNorm \cite{logitnorm} applies logit normalization, and MOS \cite{mos} partitions the class space hierarchically. Other approaches leverage auxiliary OOD samples, including OE \cite{oe}, which encourages flat predictions, MCD \cite{mcd}, which amplifies entropy discrepancies, UDG \cite{udg}, which clusters out mixed ID data, and MixOE \cite{mixoe}, which mixes ID and OOD data to expand coverage. 
A key challenge for these methods lies in selecting appropriate auxiliary data while mitigating the risk of overfitting.
Another major line of work focuses on post-hoc methods, which are training-free and computationally efficient. These approaches can be broadly categorized into logits-based and feature-based methods. Logits-based approaches derive OOD scores from model outputs, such as MaxLogit \cite{maxlogit}, ODIN \cite{odin}, Energy \cite{energy}, GradNorm \cite{gradnorm}, and MaxCosine \cite{maxcosine}. In contrast, feature-based methods analyze internal representations, including Mahalanobis \cite{mahananobis}, KNN \cite{knn}, FDBD \cite{fdbd}, ViM \cite{vim}. 


\textbf{OOD Detection for Contrastive Learning Pretrained Models.} \ 
For contrastive learning models, MCM \cite{MCM} extends MSP \cite{msp} to vision-language models, while GL-MCM \cite{GL-MCM} incorporates local feature cues to enhance detection performance. Recent works have increasingly focused on teaching models to recognize ``what an input is not''. For example, CLIPN  \cite{CLIPN} fine-tunes CLIP to generate negative prompts corresponding to absent concepts. With the rise of prompt engineering, many approaches improve OOD detection by fine-tuning prompts with a few ID samples. For example, LoCoOp  \cite{locoop} builds on CoOp’s  \cite{coop}  by applying OOD regularization through local OOD features. Approaches such as LSN  \cite{LSN} and NegPrompt  \cite{negprompt} jointly learn positive and negative prompts to better capture OOD characteristics. ID-like  \cite{idlike} introduces prompts aligned with ID-like regions near the OOD boundary to refine detection through targeted fine-tuning. 
Additionally, some CLIP-based approaches leverage  external concept vocabularies  \cite{neglabel, csp} or language models  \cite{eoe} to map OOD samples to auxiliary semantic  categories, but such methods are generally constrained to semantic-shift scenarios. 
While CLIP-based OOD detection methods have demonstrated  strong performance, most depend on additional  data or external models, limiting their scalability and general applicability. 
In this work, we propose a lightweight, post-hoc OOD method for pre-trained models, which does not require additional supervision.

\section{Preliminary}

\subsection{Problem Setting}

We study the zero-shot and few-shot OOD detection problem with a pre-trained model $f$.
Let $\mathcal{Y}_{in}$ denote the set of \textit{known} classes (in-distribution, ID) for a given classification task.
Given a test sample $\boldsymbol{x}$ drawn from the input space $\mathcal{X}$, if its true label $y \notin \mathcal{Y}_{in}$, then $\boldsymbol{x}$  is considered as an OOD sample.
The goal of zero-shot OOD detection is to identify samples that do not belong to any of known ID classes, without requiring additional training or exposure to OOD data during model development.
Similarly, few-shot OOD detection aims to achieve the same goal, but with access to a small number of training samples from the ID class set $\mathcal{Y}_{in}$.

Formally, in the zero-shot and few-shot OOD problem, the decision function $D : \mathcal{X} \rightarrow \{\text{ID, OOD}\}$ is typically constructed as: 
\begin{equation}
D\left(\boldsymbol{x}\right)=\left\{
\begin{aligned}
&\text{ID},  & \text{if }S\left(\boldsymbol{x};f\right)\geq \lambda\\
&\text{OOD},  & \text{if }S\left(\boldsymbol{x};f\right) < \lambda\\
\end{aligned},
\right.
\end{equation}
where $S:\mathcal{X}\rightarrow \mathbb{R}$ is a scoring function based on pre-trained model $f$, which assigns a scalar confidence score to each input sample.
The decision threshold $\lambda$ establishes the decision boundary: samples satisfying $S\left(\boldsymbol{x}\right) < \lambda$ are identified as OOD, while those with $S\left(\boldsymbol{x}\right) \geq \lambda$ are considered ID.

\subsection{OOD Detection Scoring Function}
The design of the scoring function is crucial, as it directly determines the effectiveness of OOD detection. 
In this subsection, we revisit the design principles of existing logit-based scoring functions, with a focus on two representative approaches: Maximum Logit (MaxLogit) \cite{maxlogit} and Maximum Concept Matching (MCM) \cite{MCM}.   
The most straightforward approach, MaxLogit \cite{maxlogit}, uses the maximum logit value as the confidence score.
Formally, given $K$ known classes (\ie, $\left|\mathcal{Y}_{in}\right|=K$),
for an input sample $\boldsymbol{x}$, the pre-trained model $f$ produces a logit vector as $\boldsymbol{z}=f\left(\boldsymbol{x};\mathcal{Y}_{in}\right)\in \mathbb{R}^K$.
The OOD score by MaxLogit is then calculated as:
\begin{equation}
    S_{\mathrm{MaxLogit}}\left(\boldsymbol{x};f\right)=\max_k z_k,
\end{equation}
where $z_k$ denotes the predicted logit for class $k$.
However, MaxLogit focus solely on the most confident prediction, disregarding  potentially useful information from other classes.
A more widely used scoring function is MCM \cite{MCM}, which applies softmax normalization to the logits and takes the maximum probability as the OOD score:
\begin{equation}
    S_{\mathrm{MCM}}\left(\boldsymbol{x};f\right)=\max_k \frac{e^{z_k/\tau}}{\sum_{j=1}^Ke^{z_j/\tau}},
\end{equation}
where $\tau$ is a temperature scaling parameter.
Unlike MaxLogit, MCM leverages the full logit distribution through the denominator of the softmax operation.
As a result, MCM generally achieves superior separability between ID and OOD samples compared to MaxLogit.

\section{Method}
\subsection{OOD score: LogitGap}


\begin{wrapfigure}{r}{0.37\textwidth}  
  \vspace{-15pt}  
  \centering
  \includegraphics[width=0.33\textwidth]{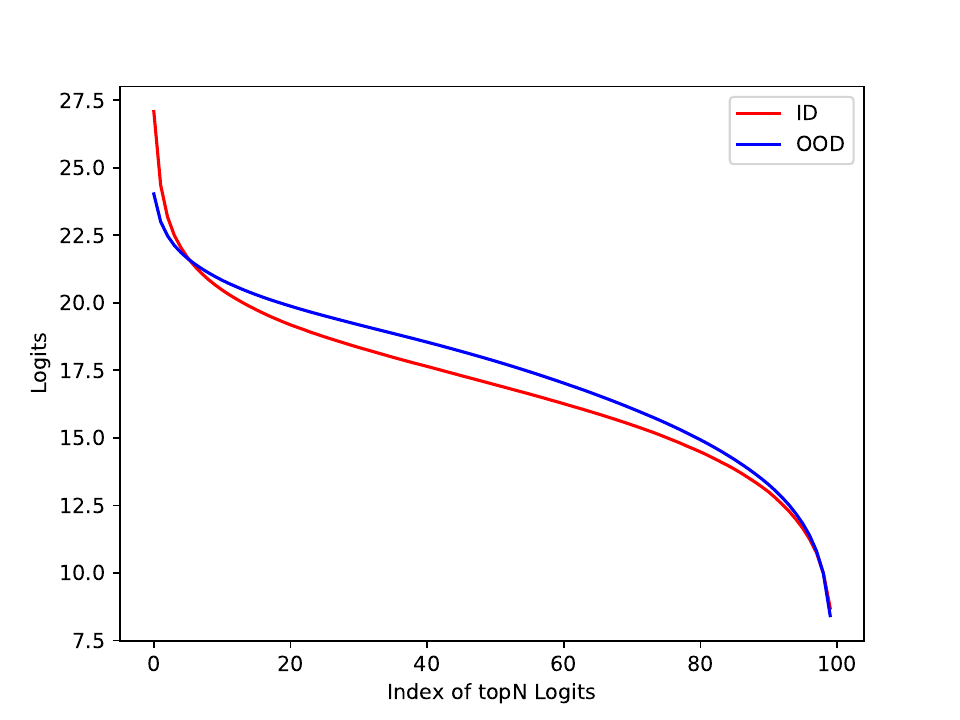}
    \caption{Descending-sorted logits on CLIP ViT-B/16 with ImageNet100 (ID) and iNaturalist (OOD).}
    
  \label{fig:sorted_logits}
  \vspace{-10pt}  
\end{wrapfigure}
We observe an interesting phenomenon in real-world data: While the mean logit values of ID and OOD samples are often similar, their logit distributions exhibit systematically  distinct patterns, as shown in Figure~\ref{fig:sorted_logits}.
Specifically, ID samples (red line in Figure~\ref{fig:sorted_logits}) tend to produce \textbf{sharper, more peaked} logit distributions, typically with one dominant logit value corresponding to the predicted class. 
In contrast, OOD samples often yield \textbf{flatter, more uniform} logit distributions, with less pronounced maxima and elevated non-maximum logits.
This is quantitatively reflected as: (i) Higher maximum logit values for ID samples compared to OOD samples; (ii) Higher non-predicted class logits in OOD samples relative to ID samples. 


Building upon these observations, we propose a simple yet effective OOD scoring function, call LogitGap, which exploits the inherent distributional differences between ID and OOD logits. The core idea is to quantify the average gap between the maximum logit and the remaining logits$-$a measure that naturally amplifies the contrast between ID and OOD samples. This formulation effectively captures the distinctive  peakedness of ID logit predictions, while highlighting the relative flatness of OOD logit distributions, thereby enhancing ID-OOD separability.

Formally, given a logit vector $\boldsymbol{z}$ predicted for a test sample $\boldsymbol{x}$, we first sort its elements in descending order to obtain $\boldsymbol{z'}$, where $z'_n$ denotes the $n$-th largest logit. 
The LogitGap score is then defined as:
\begin{equation}
    S_{\mathrm{LogitGap}}\left(\boldsymbol{x};f\right) = \frac{1}{K-1}{\sum_{j=2}^K{\left(z'_1-z'_j\right)}}
    \label{eq:LogitGap}
\end{equation}
The main difference between MCM \cite{MCM} and our proposed LogitGap lies in how they utilize non-maximum class information. 
MCM implicitly incorporates this information through the normalization process in the softmax denominator, while LogitGap explicitly quantifies the average gap between the maximum logit and all other logits.
This explicit formulation allows for more direct utilization of the discriminative patterns observed in logit distributions.
To establish a theoretical connection between MCM and LogitGap, we present the following analysis.

\begin{theorem}
    \label{the:geq_mcm}
    Given a $K$-way classification task, let the predicted logit vector for a sample $\boldsymbol{x}$ be $\boldsymbol{z} = \left[z_1, z_2, \ldots, z_K\right]$. Let $\boldsymbol{z}' = \left[z'_1, z'_2, \ldots, z'_K\right]$ denote the sorted logits in descending order, such that $z'_1 = \max_{k} z_k$. 
    Then, for the temperature scaling parameter $\tau$ used in the softmax function, if $\tau > 2(K-1)$,
    we have
    $$
    \operatorname{FPR_{LogitGap}}\left(\lambda_\text{L}\right) \leq \operatorname{FPR_{MCM}}\left(\tau, \lambda_{\text{MCM}}\right), 
    $$
    where $\operatorname{FPR_{LogitGap}}\left(\lambda_\text{L}\right)$ is the false positive rate based on LogitGap with threshold $\lambda_L$. Similarly, $ \operatorname{FPR_{MCM}}\left(\tau, \lambda_{\text{MCM}}\right)$ is the false positive rate based on MCM with temperature $\tau$ and threshold $\lambda_{\text{MCM}}$. 
\end{theorem}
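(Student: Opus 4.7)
The plan is to prove a pointwise set-containment: for an appropriately paired threshold $\lambda_{\mathrm{MCM}} = g_\tau(\lambda_L)$, we have $\{\boldsymbol{x}: S_{\mathrm{LogitGap}}(\boldsymbol{x}) \geq \lambda_L\} \subseteq \{\boldsymbol{x}: S_{\mathrm{MCM}}(\boldsymbol{x}) \geq \lambda_{\mathrm{MCM}}\}$. Intersecting both sides of this containment with the support of the OOD distribution and taking probabilities then yields $\operatorname{FPR_{LogitGap}}(\lambda_L) \leq \operatorname{FPR_{MCM}}(\tau, \lambda_{\mathrm{MCM}})$ directly, since every OOD sample passing the LogitGap threshold must also pass the MCM threshold.

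I would begin by reparametrizing in terms of the nonnegative sorted gaps $d_j := z'_1 - z'_j$ satisfying $0 \leq d_2 \leq \cdots \leq d_K$, giving $S_{\mathrm{LogitGap}} = \bar d := \tfrac{1}{K-1}\sum_{j=2}^K d_j$ and $S_{\mathrm{MCM}} = \bigl[1 + \sum_{j=2}^K e^{-d_j/\tau}\bigr]^{-1}$. Since $x \mapsto e^{-x/\tau}$ is convex, the sum $\sum_{j=2}^K e^{-d_j/\tau}$ is convex in the gap vector and hence attains its maximum over the feasible polytope $\{d_j \geq 0,\ d_2 \leq \cdots \leq d_K,\ \sum_j d_j = (K-1)\bar d\}$ at an extreme point. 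The worst such extreme point concentrates all mass in the largest gap ($d_K = (K-1)\bar d$, $d_2 = \cdots = d_{K-1} = 0$, which respects the sorted ordering), yielding the closed-form lower envelope $g_\tau(\bar d) = [\,K-1 + e^{-(K-1)\bar d/\tau}\,]^{-1}$ for $S_{\mathrm{MCM}}$ as a function of $\bar d$ alone. Because $g_\tau$ is strictly monotonically increasing, setting $\lambda_{\mathrm{MCM}} := g_\tau(\lambda_L)$ makes the implication $\bar d \geq \lambda_L \Rightarrow S_{\mathrm{MCM}} \geq \lambda_{\mathrm{MCM}}$ immediate, completing the containment.

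The main obstacle is pinning down where the assumption $\tau > 2(K-1)$ enters: the convex extreme-point argument alone already yields a containment for any positive $\tau$, so this specific threshold must be needed either (i) in verifying that the paired thresholds simultaneously match TPR levels under the ID distribution, so that the FPR comparison is non-vacuous rather than trivially achievable by raising any threshold, or (ii) in a finer Taylor analysis that controls the second-order remainder $\sum d_j^2/(2\tau^2)$ relative to the first-order contribution $(K-1)\bar d/\tau$ uniformly over admissible gap vectors, tightening the worst-case envelope. Handling this matched-threshold interpretation carefully, and confirming that the concentrated extreme point really dominates all other vertices of the sorted polytope, is the technically delicate step in making the theorem informative.
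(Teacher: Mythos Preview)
Your convexity/extreme-point argument is mathematically clean and does establish the envelope $S_{\mathrm{MCM}}\ge g_\tau(S_{\mathrm{LogitGap}})$ for every $\tau>0$, but it is a genuinely different route from the paper's, and the difference explains why you cannot locate the constant $2(K-1)$. The paper's proof does not use convexity at all. Instead it invokes an \emph{a priori} range assumption on the logits (CLIP outputs lie in $[-1,1]$, so every gap $d_j=z'_1-z'_j\le 2$ and hence $\sum_{j}d_j\le 2(K-1)$). Combining this uniform bound with the trivial fact that the maximum softmax probability is at least $1/K$ gives the crude linear envelope
\[
S_{\mathrm{LogitGap}}\;\le\;\frac{2(K-1)}{K}\;=\;2(K-1)\cdot\frac{1}{K}\;\le\;2(K-1)\,S_{\mathrm{MCM}},
\]
from which $\{S_{\mathrm{LogitGap}}>\lambda_L\}\subseteq\{S_{\mathrm{MCM}}>\lambda_L/(2(K-1))\}$ follows immediately. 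The constant $2(K-1)$ is therefore inherited from the output boundedness, not from any property of the softmax; your Taylor-expansion speculation is looking in the wrong place.

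Your identification of the main obstacle is accurate: once one has a set containment of the form $\{S_{\mathrm{LogitGap}}>\lambda_L\}\subseteq\{S_{\mathrm{MCM}}>\mu\}$ for some $\mu$, the remaining work is to argue $\mu\ge\lambda_{\mathrm{MCM}}$ when both thresholds are pegged to the same TPR level. The paper handles this by introducing the auxiliary score $S_{\mathrm{LogitGap\_softmax}}$ (which differs from $S_{\mathrm{MCM}}$ by the constant $1/K$, so $\lambda_{\mathrm{MCM}}=\lambda_{\mathrm{LM}}+1/K$) and doing a two-case comparison of $\lambda_{\mathrm{LM}}$ against $\lambda_L/\tau$; this is where $\tau$ finally enters. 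Your own envelope $g_\tau$, being tighter than the paper's linear one, would in principle allow a sharper sufficient condition on $\tau$, but you would still need some bridge (either the boundedness or an intermediate score) to compare the TPR-matched thresholds, and that bridge is absent from your plan.
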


For clarity, we provide a brief proof sketch below (complete proof is provided in Appendix \ref{SecA}).

\begin{proof}[proof sketch]
False Positive Rate (FPR) measures the likelihood of an ID sample being misclassified as OOD.
A lower FPR indicates better OOD detection performance. 
Let $Q_x$ denotes the out-of-distribution $P_{x|\mathrm{OOD}}$.
By definition, we express the FPR of MCM, denoted as $\operatorname{FPR_{MCM}}(\tau, \lambda_\mathrm{MCM})$, in the following:
\begin{align}
FPR_\mathrm{MCM}(\tau, \lambda_\mathrm{MCM})=Q_{x}\left(\frac{e^{z_1'/\tau}}{\sum_{j=1}^ke^{z_j'/\tau}} > \lambda_\mathrm{MCM}\right) 
\label{eq:sketch_mcm}
\end{align}
Similarly, the FPR of LogitGap can be written as
\begin{align}
    FPR_\mathrm{LogitGap}(\tau, \lambda_\mathrm{L})=Q_{x}\left(\frac{\sum_{j=1}^K{(z'_1-z'_j)}}{\tau K} > \frac{\lambda_L}{\tau}\right) 
    \label{eq:sketch_logitgap}
\end{align}

By introducing an intermediate OOD score function, LogitGap with softmax (LogitGap\_softmax), we establish a connection between MCM and LogitGap. We define it as 
$S_\mathrm{LogitGap\_softmax}(\boldsymbol{x};f)=\frac{\sum_{i=1}^K[e^{z_1'/\tau}-e^{z_i'/\tau}]}{K\sum_{j=1}^Ke^{z_j'/\tau}}$.
Accordingly, we express the FPR of LogitGap\_softmax as
\begin{align}
    FPR_\mathrm{LogitGap-softmax}(\tau, \lambda_\mathrm{LM})=Q_{x}\left(\frac{\sum_{i=1}^K[e^{z_1'/\tau}-e^{z_i'/\tau}]}{K\sum_{j=1}^Ke^{z_j'/\tau}} > \lambda_\mathrm{LM}\right), 
\label{eq:sketch_logitmcm}
\end{align}
where $\lambda_\mathrm{MCM}=\lambda_\mathrm{LM}+\frac{1}{K}$.

Then, we aim to demonstrate the performance of LogitGap is guaranteed to surpass that of MCM, which is equal to finding the conditions under which the inequality holds: $FPR_{LogitGap}(\tau, \lambda_\mathrm{L}) < FPR_{MCM}(\tau, \lambda_\mathrm{MCM})$. By combining the results from Eq.(\ref{eq:sketch_mcm}) Eq.(\ref{eq:sketch_logitgap}), we can derive that the inequality holds with the condition of $\tau > 2(K-1)$.

\end{proof}

Theorem \ref{the:geq_mcm} demonstrates that when the temperature scaling parameter $\tau$ exceeds a certain threshold, the OOD detection performance of LogitGap is guaranteed to surpass that of MCM \cite{MCM}. This result stems from fundamental differences in how the two methods process logit information: (i) \textbf{Information Preservation.} MCM operates on softmax-normalized probabilities, which inherently compress logit magnitudes. Differently, LogitGap directly exploits the raw logit margins, preserving richer discriminative cues. (ii) \textbf{Temperature Sensitivity.} The performance gap increases with higher $\tau$: for MCM: higher temperatures exacerbate information loss by overly dispersing the probability mass; For LogitGap, the margin-based formulation remains robust regardless of $\tau$.


\subsection{Logits Selection for Focused Scoring}
While LogitGap utilize the complete set of $K$ predicted logits, we identify a critical limitation in $K$-way classification: certain classes consistently yield negligible activations, particularly those that are semantically or visually unrelated to the input. These inactive logits provide limited discriminative information and thus contribute little to OOD detection. The issue becomes more pronounced as $K$ increases, since a larger proportion of irrelevant logits is introduced. These irrelevant logits add noise, reducing the clarity of the distributional differences between ID and OOD samples.

By eliminating redundant tail information, the structural differences between ID and OOD prediction distributions become more pronounced and easier to exploit. To achieve this, we propose narrowing the focus to a more compact and informative subset of the logit space. Formally, we sort the predicted logits in descending order and select the top $N$ (out of $K$) logits, resulting in a reduced prediction vector. We then apply the LogitGap scoring function to this truncated logit vector 
\begin{equation}
S_{\mathrm{LogitGap}\text{-}\mathrm{topN}}\left(\boldsymbol{x};f\right) = \frac{1}{N-1}{\sum_{j=2}^N{\left(z'_1-z'_j\right)}}
    \label{eq:LogitGap_topn1}
\end{equation}

\textbf{How to Determine the Hyperparameter $N$.} \ 
The effectiveness of LogitGap largely depends on the choice of the hyperparameter $N$, which controls the number of top logits retained for OOD scoring. An appropriate selection of $N$ strikes a balance between capturing informative signals and avoiding noisy, uninformative logits. 
Formally, given the sorted logits $\boldsymbol{z'}$, the  $\mathrm{LogitGap}\text{-}\mathrm{topN}$ score can be equivalently reformulated as:
\begin{align}
    S_{\mathrm{LogitGap}\text{-}\mathrm{topN}}\left(\boldsymbol{x};f\right) &= \frac{1}{N-1}{\sum_{j=2}^N{\left(z'_1-z'_j\right)}} = \frac{1}{N-1}{\sum_{j=2}^N{z'_1}}- \frac{1}{N-1}{\sum_{j=2}^N{z'_j}} =z'_1 - \bar{z}'_N
    \label{eq:LogitGap_topn2}
\end{align}
where $\bar{z}'_N$ denotes the mean of the logits ranked from second to the $N$-th largest. Then, for OOD samples 
drawn from $P_{\mathrm{OOD}}\left(\boldsymbol{x}\right)$ and ID samples 
drawn from $P_{\mathrm{ID}}\left(\boldsymbol{x}\right)$, our objective is to identify the optimal value of $N$ that maximizes the score discrepancy between them. This can be expressed as:
\begin{align}
    & \arg \max_N \left({\mathbb{E}_{\boldsymbol{x}\sim P_{\mathrm{ID}}}\left[S_{\mathrm{LogitGap}\text{-}\mathrm{topN}}\left(\boldsymbol{x}\right)\right] - \mathbb{E}_{\boldsymbol{x}\sim P_{\mathrm{OOD}}}\left[S_{\mathrm{LogitGap}\text{-}\mathrm{topN}}\left(\boldsymbol{x}\right)\right]}\right) \notag \\
    = & \arg \max_N \left( {\mathbb{E}_{\boldsymbol{x}\sim P_{\mathrm{ID}}}\left[z'_1-\bar{z}'_N\right] - \mathbb{E}_{\boldsymbol{x}\sim P_{\mathrm{OOD}}}\left[z'_1-\bar{z}'_N\right]} \right) \notag \\
    = & \arg \max_N \left( \mathbb{E}_{\boldsymbol{x}\sim P_{\mathrm{ID}}}\left[z'_1\right] - \mathbb{E}_{\boldsymbol{x}\sim P_{\mathrm{OOD}}}\left[z'_1\right] +  {\mathbb{E}_{\boldsymbol{x}\sim P_{\mathrm{OOD}}}[\bar{z}'_N] - {\mathbb{E}_{\boldsymbol{x}\sim P_{\mathrm{ID}}}[\bar{z}'_N]}}\right) \notag \\
    = & \arg \max_N \left( {\mathbb{E}_{\boldsymbol{x}\sim P_{\mathrm{OOD}}}[\bar{z}'_N] - {\mathbb{E}_{\boldsymbol{x}\sim P_{\mathrm{ID}}}[\bar{z}'_N]}}\right),
    \label{ep:topn}
\end{align}
Equation (\ref{ep:topn}) provides a criterion for selecting the hyperparameter $N$. However, in practical scenarios, it is often challenging to obtain authentic OOD samples. To address this, we assume access to a small ID validation set (containing no more than 100 samples) and simulate potential OOD data by applying interpolation-based transformations to the ID features and injecting random noise. Based on these synthesized samples, we can estimate an appropriate value for $N$ through Equation (\ref{ep:topn}).

\section{Experiments}

\subsection{Experimental Settings}
\textbf{Datasets.} 
We evaluate the effectiveness of LogitGap across multiple OOD detection benchmarks. Specifically, we use ImageNet \cite{imagenet} or ImageNet-100 as ID datasets, while NINCO \cite{ninco}, ImageNet-OOD \cite{ImageNetOOD}, and ImageNet-O \cite{imagenet-ao} are used as OOD datasets. 
Each OOD dataset contains categories that are disjoint from those in the corresponding ID dataset. Following MCM \cite{MCM}, we also construct a more challenging OOD detection task by alternately using ImageNet-10 and ImageNet-20 as the ID and OOD datasets. This setting leverages semantic similarity within the label space to increase task difficulty, providing a more rigorous evaluation of OOD separability.

\begin{table}[]
\centering
\scriptsize
\setlength{\tabcolsep}{1.0pt}
\caption{OOD Detection performance on CLIP ViT-B/16 under zero-shot and few-shot settings. Results are reported with ImageNet as the ID dataset in the semantic shift scenario.}
\begin{tabular}{lcccccccccccc}
\toprule
                   & \multicolumn{9}{c}{OOD Dataset}  & \multicolumn{3}{c}{\multirow{2}{*}{AVG}} \\ 
                   & \multicolumn{3}{c}{NINCO} & \multicolumn{3}{c}{ImageNet-O} & \multicolumn{3}{c}{ImageNetOOD} & \multicolumn{3}{c}{} \\ \cmidrule(lr){2-4} \cmidrule(lr){5-7} \cmidrule(lr){8-10} \cmidrule(lr){11-13}
Method             & FPR95 $\downarrow$   & AUROC $\uparrow$   & AUPR $\uparrow$  & FPR95 $\downarrow$   & AUROC $\uparrow$   & AUPR $\uparrow$  & FPR95 $\downarrow$   & AUROC $\uparrow$   & AUPR $\uparrow$  & FPR95 $\downarrow$   & AUROC $\uparrow$   & AUPR $\uparrow$     \\
\midrule
 & \multicolumn{12}{c}{Zero-shot}    \\
Energy  \cite{energy}           & 84.11         & 72.04         & 95.65      & 81.60          & 75.58          & 98.66         & 79.12           & 76.77          & 83.96         & 81.61        & 74.8         & 92.76      \\
\quad+TAG \cite{TAG}              & 83.11         & 71.15         & 95.46      & 79.65          & 77.10          & 98.79         & 78.34           & 78.03          & 85.81         & 80.37        & 75.43        & 93.35      \\
MCM   \cite{MCM}           & 79.67         & 73.59         & 95.87      & 75.85          & 79.52          & 98.93         & 80.98           & 78.33          & 85.42         & 78.83        & 77.15        & 93.41      \\
\quad+TAG \cite{TAG}              & 81.63         & 71.33         & 95.41      & 77.60          & 79.95          & 98.97         & 82.99           & 78.79          & 86.42         & 80.74        & 76.69        & 93.6       \\
MaxLogit  \cite{maxlogit}        & 79.41         & 74.35         & 96.03      & 77.15          & 77.85          & 98.79         & 75.85           & 78.67          & 85.16         & 77.47        & 76.96        & 93.33      \\
\quad+TAG \cite{TAG}         & 77.70         & 73.92         & 95.95      & 74.50          & 79.48          & 98.92         & \textbf{75.17}           & 80.07          & \textbf{87.01}         & 75.79        & 77.82        & \textbf{93.96}      \\
GL-MCM            & \textbf{74.38 }        & 76.03         & 96.26      & 72.35          & 79.50          & 98.88         & 79.16           & 77.31          & 83.67         & 75.30        & 74.74        & 86.23      \\

\rowcolor{gray!20} 
LogitGap & {76.83}   & 76.43   & 96.37& 72.35   & 81.32   & \textbf{99.03}  & 76.37   & 79.95   & 86.06  & 75.18   & 79.23   & 93.82     \\     
\rowcolor{gray!20} 
LogitGap*  & 77.42   & \textbf{76.51}   & \textbf{96.38}& \textbf{71.95}   & \textbf{81.45}   & \textbf{99.03}  & {75.40}    & \textbf{80.27}   & {86.21}  & \textbf{74.92}   & \textbf{79.41}   & {93.87}  \\
\midrule

& \multicolumn{12}{c}{One-shot}       \\
CoOp  \cite{coop}              & 84.01   & 68.83   & 94.91& 75.55   & 78.63   & 98.84  & 82.25   & 76.88   & 84.20  & 80.60   & 74.78   & 92.65  \\
\rowcolor{gray!20} 
\quad+LogitGap*       & \textbf{82.85}              & \textbf{73.57}              & \textbf{95.91}           & \textbf{74.85}              & \textbf{79.40}              & \textbf{98.88}             & \textbf{78.31}              & \textbf{78.09}              & \textbf{84.55}             & \textbf{78.67}              & \textbf{77.02}              & \textbf{93.11}  \\
ID-Like   \cite{idlike}           & 73.02   & 75.83   & 95.86& 80.95   & 69.81   & 98.09  & 83.23   & 68.57   & 75.22  & 79.07   & 71.40   & 89.72  \\
\rowcolor{gray!20} 
\quad+LogitGap*        & \textbf{68.78}              & \textbf{80.90}              & \textbf{97.07}           & \textbf{71.20}              & \textbf{78.21}              & \textbf{98.76}             & \textbf{75.06}              & \textbf{76.33}              & \textbf{82.40}             & \textbf{71.68}              & \textbf{78.48}              & \textbf{92.74}             \\

\midrule
& \multicolumn{12}{c}{Four-shot}    \\
CoOp  \cite{coop}              & 81.46   & 70.16   & 95.14& 75.40   & 79.80   & 98.93  & 80.40   & 78.56   & 85.56  & 79.09   & 76.17   & 93.21  \\
\rowcolor{gray!20} 
\quad+LogitGap*       & \textbf{80.69}              & \textbf{73.97}              & \textbf{95.95}           & \textbf{72.70}              & \textbf{81.10}              & \textbf{99.01}             & \textbf{76.07}              & \textbf{80.16}              & \textbf{86.42}             & \textbf{76.49}              & \textbf{78.41}              & \textbf{93.79}             \\

ID-Like   \cite{idlike}           & 81.15   & 71.79   & 95.01& 82.50   & 68.41   & 97.92  & 88.54   & 62.40   & 70.25  & 84.06   & 67.53   & 87.73  \\
\rowcolor{gray!20} 
\quad+LogitGap*         & \textbf{77.51}              & \textbf{75.39}              & \textbf{95.91}           & \textbf{78.10}              & \textbf{75.29}              & \textbf{98.57}             & \textbf{82.65}              & \textbf{72.88}              & \textbf{79.85}             & \textbf{79.42}              & \textbf{74.52}              & \textbf{91.44}            \\

\bottomrule
\end{tabular}
\label{tab:IN-semantic}
\end{table}

\textbf{Comparison Methods.}
We adopt CLIP \cite{clip} with ViT-B/16 \cite{vit-b16} backbone as the pre-trained model for zero-shot OOD detection. Our evaluation encompasses both zero-shot and few-shot settings.
For the few-shot OOD detection setting, we randomly select one (one-shot) or four  (four-shot) samples from each ID class and use these limited samples to fine-tune CLIP. 
To provide a comprehensive comparison, we benchmark against several representative post-hoc OOD detection methods,  
including MCM \cite{MCM}, MaxLogit \cite{maxlogit}, Energy \cite{energy} and GL-MCM \cite{GL-MCM}, as well as an enhancing method TAG \cite{TAG}. 
Additionally, we evaluate two few-shot OOD approaches: CoOp \cite{coop} and ID-Like \cite{idlike}.
Specifically, CoOp \cite{coop} fine-tunes prompts using limited labeled samples, serving as a simple few-shot baseline. 
ID-Like \cite{idlike}  jointly optimizes ID-like prompts and synthesizes outliers from semantically related ID data, thereby improving detection of semantically similar OOD samples.
We compare these methods against two variants of our approach: \textbf{LogitGap}, which uses a fixed value of $N$ set to $20\%$ of the total number of classes, and \textbf{LogitGap*}, where $N$ is adaptively determined using an ID validation set along with synthetic OOD data. 

\textbf{Evaluation Metrics.}
We use three standard metrics commonly used in OOD detection literature: (1) False Positive Rate (FPR95): Measures the probability that an OOD sample is misclassified as ID when the true positive rate of ID samples is fixed at $95\%$. (2) Area Under the Receiver Operating Characteristic Curve (AUROC), and (3) Area Under the Precision-Recall Curve (AUPR).

\subsection{Results}
\textbf{Zero-Shot OOD Detection.} \ 
We first conduct a comprehensive evaluation of various methods in the zero-shot OOD detection setting. Our experimental setup utilizes ImageNet and ImageNet-100 as ID datasets, comparing multiple OOD scoring functions against our proposed LogitGap approach. 
As shown in Table \ref{tab:IN-semantic} and Table \ref{tab:IN100-semantic}, both LogitGap and its adaptive variant LogitGap* achieve superior performance compared to existing methods.  
Specifically, on ImageNet as ID dataset, LogitGap reduces FPR95 by $3.65\%$ compared to MCM \cite{MCM}. On ImageNet-100, the improvement reaches $5.78\%$. These significant performance gains highlight the effectiveness of our logit-gap based scoring approach. Furthermore, LogitGap* typically provides additional improvements over the fixed-parameter LogitGap, validating the advantage of our adaptive parameter selection strategy\footnote{In Table \ref{tab:IN100-semantic}, the performances of LogitGap and LogitGap* are identical because the parameter search result is equal to 20\% of the number of categories.}.
\begin{table}[]
\centering
\scriptsize
\setlength{\tabcolsep}{1.0pt}
\caption{OOD Detection performance on CLIP ViT-B/16 under zero-shot and few-shot settings. Results are reported with ImageNet-100 as the ID dataset in the semantic shift scenario.}
\begin{tabular}{lcccccccccccc}
\toprule
                   & \multicolumn{9}{c}{OOD Dataset}                                                              & \multicolumn{3}{c}{\multirow{2}{*}{AVG}} \\ 
                   & \multicolumn{3}{c}{NINCO} & \multicolumn{3}{c}{ImageNet-O} & \multicolumn{3}{c}{ImageNetOOD} & \multicolumn{3}{c}{}                     \\ \cmidrule(lr){2-4} \cmidrule(lr){5-7} \cmidrule(lr){8-10} \cmidrule(lr){11-13}
Method             & FPR95 $\downarrow$   & AUROC $\uparrow$   & AUPR $\uparrow$  & FPR95 $\downarrow$   & AUROC $\uparrow$   & AUPR $\uparrow$  & FPR95 $\downarrow$   & AUROC $\uparrow$   & AUPR $\uparrow$  & FPR95 $\downarrow$   & AUROC $\uparrow$   & AUPR $\uparrow$     \\
\midrule
& \multicolumn{12}{c}{Zero-shot}  \\
Energy \cite{energy}            & 52.08   & 88.97   & 88.36 & 54.05     & 89.01    & 95.29   & 51.25     & 89.44     & 68.42   & 52.46        & 89.14        & 84.02      \\
MCM  \cite{MCM}              & 50.03   & 89.82   & 89.28 & 48.55     & 90.93    & 96.28   & 51.25     & 90.29     & 70.17   & 49.94        & 90.35        & 85.24      \\
MaxLogit \cite{maxlogit}          & 45.51   & 89.79   & 88.98 & 47.95     & 90.13    & 95.74   & 46.57     & 90.37     & 70.00      & 46.68        & 90.10         & 84.91      \\
\rowcolor{gray!20}
LogitGap & \textbf{42.62}              & \textbf{91.08}              & \textbf{90.55}           & \textbf{43.65}              & \textbf{92.02}              & \textbf{96.74}             & \textbf{46.21}              & \textbf{91.21}              & \textbf{71.59}             & \textbf{44.16}              & \textbf{91.44}              & \textbf{86.29}             \\
\rowcolor{gray!20}
LogitGap* & \textbf{42.62}              & \textbf{91.08}              & \textbf{90.55}           & \textbf{43.65}              & \textbf{92.02}              & \textbf{96.74}             & \textbf{46.21}              & \textbf{91.21}              & \textbf{71.59}             & \textbf{44.16}              & \textbf{91.44}              & \textbf{86.29}             \\

\midrule

                    & \multicolumn{12}{c}{One-shot}  \\
CoOp   \cite{coop}             & 58.03                       & 89.30                       & 89.46                    & 52.85                       & 89.61                       & 95.55                      & 62.78                       & 87.06                       & 60.96                      & 57.89                       & 88.66                       & 81.99                      \\
\rowcolor{gray!20}
\quad+LogitGap*       & \textbf{51.28}              & \textbf{90.64}              & \textbf{90.87}           & \textbf{50.75}              & \textbf{90.32}              & \textbf{95.93}             & \textbf{60.57}              & \textbf{87.77}              & \textbf{62.12}             & \textbf{54.20}              & \textbf{89.58}              & \textbf{82.97}             \\

ID-Like   \cite{idlike}           & 58.56                       & 85.76                       & 83.65                    & 61.80                       & 84.34                       & 92.36                      & 75.00                       & 79.93                       & 44.65                      & 65.12                       & 83.34                       & 73.55                      \\
\rowcolor{gray!20}
\quad+LogitGap*         & \textbf{39.93}              & \textbf{91.44}              & \textbf{90.70}           & \textbf{40.25}              & \textbf{90.65}              & \textbf{95.48}             & \textbf{45.73}              & \textbf{89.89}              & \textbf{66.06}             & \textbf{41.97}              & \textbf{90.66}              & \textbf{84.08}         \\   

\midrule
                    & \multicolumn{12}{c}{Four-shot}  \\
CoOp    \cite{coop}            & 60.43                       & 85.49                       & 83.68                    & 51.15                       & 90.05                       & 95.81                      & 59.56                       & 88.39                       & 64.25                      & 57.05                       & 87.98                       & 81.25                      \\
\rowcolor{gray!20}
\quad+LogitGap*       & \textbf{52.84}              & \textbf{87.67}              & \textbf{86.02}           & \textbf{47.05}              & \textbf{91.14}              & \textbf{96.30}             & \textbf{53.49}              & \textbf{89.72}              & \textbf{66.91}             & \textbf{51.13}              & \textbf{89.51}              & \textbf{83.08}             \\

ID-Like  \cite{idlike}            & 46.65                       & 90.74                       & 90.14                    & 58.25                       & 86.51                       & 93.84                      & 71.82                       & 81.22                       & 47.06                      & 58.91                       & 86.16                       & 77.01                      \\
\rowcolor{gray!20}
\quad+LogitGap*         & \textbf{31.81}              & \textbf{93.44}              & \textbf{92.92}           & \textbf{44.80}              & \textbf{90.44}              & \textbf{95.72}             & \textbf{54.68}              & \textbf{88.57}              & \textbf{65.50}             & \textbf{43.76}              & \textbf{90.82}              & \textbf{84.71}             \\

\bottomrule
\end{tabular}
\label{tab:IN100-semantic}
\end{table}



\textbf{Few-Shot OOD Detection.} \   
We further investigate  the integration of LogitGap with several recent few-shot OOD detection approaches, including CoOp \cite{coop} and ID-Like  \cite{idlike}.
Specifically, CoOp \cite{coop} enhances in-distribution classification by prompt tuning, and  ID-Like \cite{idlike} extends this framework by incorporating negative prompts to enhance OOD detection. Following 
\cite{idlike}, we conduct rigorous experiments under both one-shot and four-shot finetuning configurations.
The results are summarized in Table \ref{tab:IN-semantic} and Table \ref{tab:IN100-semantic}. 
As shown, LogitGap* consistently enhances the performance of both baseline methods across all evaluation metrics. For instance, using ImageNet as ID dataset, integrating LogitGap with ID-Like \cite{idlike} improves FPR95 and AUROC by $7.39\%$ and $7.08\%$ under the 1-shot setting, respectively (Table \ref{tab:IN-semantic}). These significant performance gains demonstrate that LogitGap can serve as an effective complementary module to existing OOD detection frameworks, effectively amplifying their OOD detection capabilities.


\textbf{Hard OOD Detection.} \   
To thoroughly evaluate the robustness of LogitGap, we further conduct specialized experiments focusing on two particularly challenging categories of hard OOD samples: Semantically Hard OOD Detection and Covariate Shifts OOD Detection.

\begin{table}[]
\centering
\scriptsize
\caption{OOD Detection performance on CLIP ViT-B/16 in the semantically hard scenario. ``ImageNet-10 / ImageNet-20'': ImageNet-10 as ID and ImageNet-20 as OOD.}
\begin{tabular}{lcccccc}
\toprule
            & \multicolumn{3}{c}{ImageNet-10 / ImageNet-20} & \multicolumn{3}{c}{ImageNet-20 / ImageNet-10}                \\ \cmidrule(lr){2-4} \cmidrule(lr){5-7}

Method      & FPR95 $\downarrow$   & AUROC $\uparrow$   & AUPR $\uparrow$  & FPR95 $\downarrow$   & AUROC $\uparrow$   & AUPR $\uparrow$  \\ 
\midrule
Energy \cite{energy}           & 8.70       & 98.45     & 97.41   & 20.80      & 96.72     & 98.58   \\
MCM  \cite{MCM}             & 6.40       & 98.88     & 98.23   & 22.60      & 97.17     & 98.72   \\
MaxLogit  \cite{maxlogit}        & 8.40       & 98.54     & 97.55   & 20.20      & 97.25     & 98.79   \\
\rowcolor{gray!20}
LogitGap & 3.80       & 99.05     & 98.52   & \textbf{14.00}        & 98.14     & \textbf{99.15}   \\
\rowcolor{gray!20}
LogitGap* & \textbf{3.00}         & \textbf{99.10 }     & \textbf{98.60}    & 14.20      & \textbf{98.15}     & \textbf{99.15} \\ 
\bottomrule
\end{tabular}
\label{tab:IN10_IN20}
\end{table}

 \begin{adjustwidth}{1em}{0pt}
     \textbf{1. Semantically Hard OOD Detection.}  \ 
     Prior work \cite{MCM} has demonstrated that OOD detection becomes particularly challenging when OOD samples exhibit semantic similarity to ID data. To evaluate this difficult scenario, we adopt the experimental setup from MCM \cite{MCM}, alternately using ImageNet-10 and ImageNet-20 as ID and OOD datasets. As shown in Table \ref{tab:IN10_IN20}, LogitGap demonstrates superior performance across all evaluation metrics compared to existing methods. Specifically, when detecting ImageNet-20 as OOD against ImageNet-10 ID, LogitGap reduces FPR95 by $3.40\%$ compared to MCM \cite{MCM}. In the reverse configuration (ImageNet-20 as ID vs. ImageNet-10 as OOD), the improvement reaches $8.40\%$. These significant margins highlight LogitGap’s enhanced capability to distinguish  semantically related distributions, addressing a key challenge in real-world OOD detection scenarios.
     
 \end{adjustwidth}

 \begin{adjustwidth}{1em}{0pt}
 
     \textbf{2. Covariate Shifts OOD Detection.}  \ 
While CLIP exhibits strong generalization, its performance degrades significantly under covariate shifts. For instance, fine-tuning CLIP on ImageNet leads to notable drops on covariate-shifted variants like ImageNet-Sketch \cite{imagenet-s} and ImageNet-A \cite{imagenet-ao}, where the visual appearance of test samples significantly differs from that of ImageNet \cite{coop}. To assess LogitGap under hard OOD conditions, we consider a covariate shift setting. Specifically, we construct ID data using limited samples (one-shot or four-shot) from ImageNet, while treating ImageNet-R \cite{imagenet-r}, ImageNet-A, and ImageNet-Sketch as OOD datasets. LogitGap is integrated into several recent few-shot OOD detection methods. The results are summarized in Table \ref{tab:IN-covariate-fewshot}.

From Table  \ref{tab:IN-covariate-fewshot}, we observe that: (1) Increasing ID training samples does not consistently enhance OOD detection performance in some few-shot OOD methods. For instance,  ID-like \cite{idlike} shows higher FPR95 in 4-shot versus 1-shot training. We hypothesize that this phenomenon arises from the strong generalization capability of fine-tuned CLIP models: while aiding ID classification, it may also lead to overconfident prediction, thereby misclassifying OOD samples as incorrect ID classes and ultimately impairing OOD detection.
(2) The proposed LogitGap method consistently enhances OOD detection performance across various covariate shift scenarios and different few-shot OOD methods. For instance, LogitGap reduces FPR95 by $2.20\%$ (1-shot) and $1.17\%$ (4-shot) for ID-Like \cite{idlike}, underscoring its effectiveness under varying few-shot conditions.

 \end{adjustwidth}

\begin{table}[]
\centering
\scriptsize
\setlength{\tabcolsep}{1.0pt}
\caption{OOD Detection performance on CLIP ViT-B/16 under few-shot setting. Results are reported with ImageNet as the ID dataset in the covariate shift scenario.}
\begin{tabular}{lcccccccccccc}
\toprule
                   & \multicolumn{9}{c}{OOD Dataset}                                                              & \multicolumn{3}{c}{\multirow{2}{*}{AVG}} \\ 
                   & \multicolumn{3}{c}{ImageNet-R} & \multicolumn{3}{c}{ImageNet-A} & \multicolumn{3}{c}{ImageNet-Sketch} & \multicolumn{3}{c}{}                     \\ \cmidrule(lr){2-4} \cmidrule(lr){5-7} \cmidrule(lr){8-10} \cmidrule(lr){11-13}
Method             & FPR95 $\downarrow$   & AUROC $\uparrow$   & AUPR $\uparrow$  & FPR95 $\downarrow$   & AUROC $\uparrow$   & AUPR $\uparrow$  & FPR95 $\downarrow$   & AUROC $\uparrow$   & AUPR $\uparrow$  & FPR95 $\downarrow$   & AUROC $\uparrow$   & AUPR $\uparrow$     \\
\midrule
                    & \multicolumn{12}{c}{One-shot}   \\
CoOp   \cite{coop}             & 80.18                       & 72.44                       & \textbf{91.54}                    & 77.75                       & 77.40                        & 95.65                      & 85.53                       & 67.21                       & \textbf{67.06}                      & 81.15                       & 72.35                       & \textbf{84.75}                      \\
\rowcolor{gray!20}
\quad+LogitGap*       & \textbf{79.97}                       & \textbf{71.14}                       & 90.81                    & \textbf{75.72}                       & \textbf{79.77}                       & \textbf{96.25}                      & \textbf{82.81}                       & \textbf{68.37}                       & 67.00                         & \textbf{79.50}                        & \textbf{73.09}                       & 84.69                      \\
ID-Like  \cite{idlike}            & 79.68                       & 75.51                       & 92.61                    & 45.67                       & 88.65                       & 97.77                      & 78.14                       & 75.31                       & 75.32                      & 67.83                       & 79.82                       & 88.57                      \\
\rowcolor{gray!20}
\quad+LogitGap*         & \textbf{76.11}              & \textbf{77.28}              & \textbf{93.20}           & \textbf{43.36}              & \textbf{89.99}              & \textbf{98.14}             & \textbf{77.43}              & \textbf{76.06}              & \textbf{76.42}             & \textbf{65.63}              & \textbf{81.11}              & \textbf{89.25}             \\

\midrule
                    & \multicolumn{12}{c}{Four-shot}   \\
CoOp    \cite{coop}            & \textbf{78.38}                       & \textbf{73.53}                       & \textbf{92.02}                    & 74.87                       & 79.20                        & 96.06                      & 84.68                       & 68.54                       & \textbf{69.06}                      & 79.31                       & 73.76                       & \textbf{85.71}                      \\
\rowcolor{gray!20}
\quad+LogitGap*       & 79.59                       & 71.94                       & 91.29                    & \textbf{74.20 }                       & \textbf{80.62}                       & \textbf{96.45}                      & \textbf{82.68}                       & \textbf{69.23}                       & 68.91                      & \textbf{78.82}                       & \textbf{73.93}                       & 85.55                      \\
ID-Like   \cite{idlike}           & 85.69                       & \textbf{71.93}                       & \textbf{91.39}                    & 61.27                       & 84.80                        & 97.06                      & 87.07                       & \textbf{67.45}                       & \textbf{67.47}                      & 78.01                       & 74.73                       & \textbf{85.31}                      \\
\rowcolor{gray!20}
\quad+LogitGap*         & \textbf{84.45}                       & 71.16                       & 91.03                    & \textbf{59.13}                       & \textbf{86.31}                       & \textbf{97.44}                      & \textbf{86.95}                       & 66.80                        & \textbf{67.47}                      & \textbf{76.84}                       & \textbf{74.76}                       & \textbf{85.31}                      \\

\bottomrule
\end{tabular}
\label{tab:IN-covariate-fewshot}
\vspace{-1em} 
\end{table}

\subsection{Further Discussion}
\paragraph{Influence of the Hyperparameter $N$ on LogitGap Performance.} 
We examine the impact of the hyperparameter $N$ on the performance of the LogitGap method. 
We conduct experiments by utilizing ImageNet-100 as the ID dataset in the semantic shift senario.
As illustrated in Figure~\ref{fig:fpr_topn}, 
\begin{wrapfigure}{r}{0.36\textwidth}
  \vspace{-10pt}  
  \centering
  \includegraphics[width=0.29\textwidth]{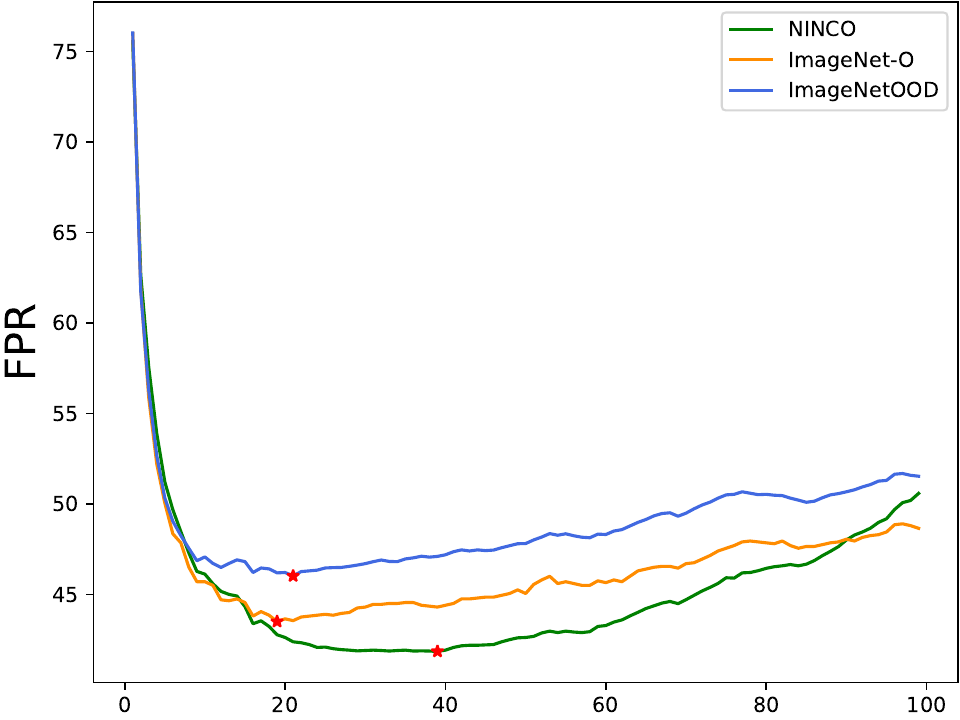}
  \caption{Variation of FPR95 performance with increasing $N$. Results are reported on CLIP ViT-B/16 with ImageNet-100 as the ID dataset in semantic shift senario.}
  \label{fig:fpr_topn}
  \vspace{-10pt}  
\end{wrapfigure}the FPR95 performance on all three OOD datasets degrades when $N$ is either too small or too large. 
This trend can be attributed to the trade-off in the representation capacity of the logits space. When $N$ is too small, the logits space is overly compact and lacks sufficient discriminative information, thereby limiting the effectiveness of LogitGap. In contrast, an excessively large $N$ introduces a substantial amount of tail information, where the distinction between ID and OOD samples becomes less pronounced. This weakens the discriminative signal and reduces overall performance.
Importantly, we find that LogitGap exhibits relatively stable and consistently strong performance when $N$ is set between $20\%$ and $50\%$ of the total number of classes. This observation suggests that a moderately compact logits space provides a favorable balance between retaining information and suppressing noise, offering a robust choice for selecting $N$ in real-world OOD detection scenarios.


\paragraph{Logit Distribution Differences Between ID and OOD Data.}


\begin{wrapfigure}{r}{0.39\textwidth}  
  \vspace{-15pt}  
  \centering
  \includegraphics[width=0.31\textwidth]{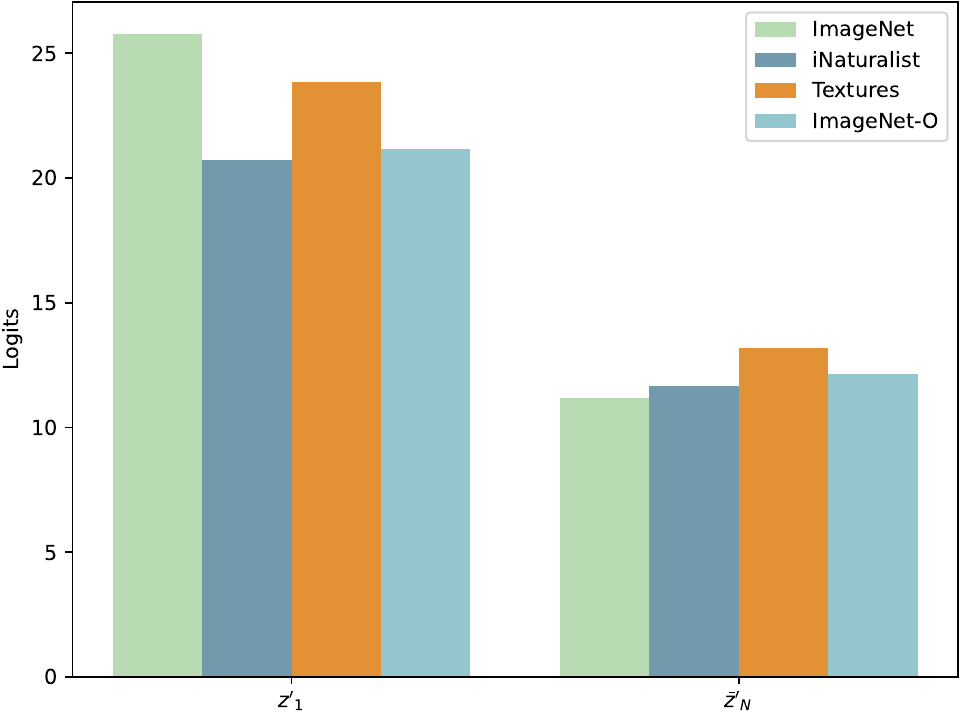}
  \caption{Logit Statistics on ViT-L/14 with ImageNet as ID.
 $z'_1$ and $\bar{z}'_N$ denote the average maximum logit and the average logit of non-predicted classes, respectively.}
  \label{fig:logit_distribution}
  \vspace{-10pt}  
\end{wrapfigure}
The design of LogitGap is motivated by the observation that the logit distributions of ID and OOD samples exhibit distinct patterns: ID samples tend to have higher maximum logit, whereas OOD samples display higher logits among the non-predicted classes.
To verify the generality of this phenomenon, we analyze the logit distribution statistics using ImageNet as the ID dataset on CLIP ViT-L/14. Specifically, $z'_1$ denotes the average maximum logit, while $\bar{z}'_N$ represents the average logit over all non-predicted classes.
As shown in Figure \ref{fig:logit_distribution}, ID samples consistently show larger $z'_1$ and smaller $\bar{z}'_N$ than OOD samples, which confirms the broad consistency of this observation. 

\paragraph{Different Variants of LogitGap.}
Motivated by the distinct logit distribution patterns of ID and OOD samples, we design LogitGap, which subtracts the average of non-maximum logits from the maximum logit. 
Since ID samples usually have higher maximum and lower remaining logits, this formulation effectively enhances ID–OOD separation.
In addition, we introduce three variants. LogitGap\_exp, LogitGap\_square, and LogitGap\_sqrt apply exponential, square, square root transformations, respectively, to further transform score differences. As shown in Table \ref{tab:IN-logitgap-variants}, these variants perform comparably to the original LogitGap, suggesting that different nonlinear transformations can achieve similarly effective discrimination.

\begin{table}[]
\centering
\scriptsize
\setlength{\tabcolsep}{1.0pt}
\caption{Ablation study on different variants of LogitGap. Results are reported on CLIP ViT-B/16 with ImageNet as the ID dataset in zero-shot setting.}
\begin{tabular}{lcccccccccccc}
\toprule
                   & \multicolumn{9}{c}{OOD Dataset}                                                              & \multicolumn{3}{c}{\multirow{2}{*}{AVG}} \\ 
                   & \multicolumn{3}{c}{NINCO} & \multicolumn{3}{c}{ImageNet-O} & \multicolumn{3}{c}{ImageNetOOD} & \multicolumn{3}{c}{}                     \\ \cmidrule(lr){2-4} \cmidrule(lr){5-7} \cmidrule(lr){8-10} \cmidrule(lr){11-13}
Method             & FPR95 $\downarrow$   & AUROC $\uparrow$   & AUPR $\uparrow$  & FPR95 $\downarrow$   & AUROC $\uparrow$   & AUPR $\uparrow$  & FPR95 $\downarrow$   & AUROC $\uparrow$   & AUPR $\uparrow$  & FPR95 $\downarrow$   & AUROC $\uparrow$   & AUPR $\uparrow$     \\
\midrule
LogitGap            & 77.42              & 76.51            & 96.38           & 71.95              & 81.45            & 99.03           & 75.40              & 80.27            & 86.21           & 74.92              & 79.41            & 93.87           \\
LogitGap\_exp       & 77.15              & 76.67            & 96.42           & 71.40              & 81.54            & 99.04           & 74.33              & 80.55            & 86.51           & 74.48              & 79.59            & 93.99           \\
LogitGap\_square    & 76.81              & 76.80            & 96.45           & 71.25              & 81.53            & 99.04           & 74.17              & 80.97            & 86.95           & 74.08              & 79.77            & 94.15           \\
LogitGap\_sqrt      & 77.15              & 76.43            & 96.36           & 71.50              & 81.58            & 99.04           & 75.44              & 80.36            & 86.33           & 74.70              & 79.46            & 93.91           \\

\bottomrule
\end{tabular}
\label{tab:IN-logitgap-variants}
\end{table}

\paragraph{Effectiveness on Traditional OOD Detection.} \   
To rigorously assess the generalizability of LogitGap, we extend our evaluation to  traditional OOD detection setting. Following the OpenOOD protocol \cite{openood}, we use CIFAR-10 \cite{cifar-10} as ID dataset and adopt the standard OpenOOD benchmark splits for OOD evaluation. The OOD benchmarks include both \textbf{near-OOD datasets}: CIFAR-100, Tiny ImageNet, and \textbf{far-OOD datasets}: MNIST \cite{mnist}, SVHN \cite{svhn}, Texture \cite{texture}, and Places365 \cite{places}.  
In this setup, we employ a ResNet-50 \cite{resnet50} backbone trained from scratch on ID data using cross-entropy loss. We compare LogitGap against a comprehensive suite of baselines: (i) Logit-based post-hoc methods: MSP \cite{msp}, MaxLogit \cite{maxlogit}, ODIN \cite{odin}, Energy \cite{energy}, GEN \cite{GEN}; (ii) Internal network statistics-based methods: Mahalanobis  \cite{mahananobis}, ReAct \cite{react}, NCI \cite{nci}, SHE \cite{she}; (iii) Training-based methods with auxiliary data: Outlier Exposure (OE) \cite{oe}, MixOE \cite{mixoe}. 

As shown in Table \ref{tab:cifar10}, LogitGap consistently outperforms all baselines across benchmarks. We observe that: (1) Compared to the logit-based method MSP \cite{msp}, LogitGap reduces FPR95 by $2.26\%$. (2) Unlike NCI \cite{nci}, which relies on the relationship between features and classifier weights, LogitGap is fully end-to-end without extra feature extraction. (3) When combined with auxiliary-data methods such as OE \cite{oe} and MixOE \cite{mixoe}, LogitGap further improves their baselines — achieving a notable $8.29\%$ FPR95 reduction on MixOE. These results demonstrate the strong generalization capability of LogitGap across diverse OOD detection paradigms in the traditional setting.

\begin{table}[]
\vspace{-1.0em} 
\centering
\caption{OOD Detection performance on ResNet-50 using CIFAR-10 as the ID dataset.}
\setlength{\tabcolsep}{1.0pt}
\scriptsize
\begin{tabular}{lcccccccccccccc}
\toprule
                                  & \multicolumn{4}{c}{Near OOD}                           & \multicolumn{8}{c}{Far OOD}                                                                                        & \multicolumn{2}{c}{}                      \\ \cmidrule(lr){2-5} \cmidrule(lr){6-13} 
                                 & \multicolumn{2}{c}{CIFAR100} & \multicolumn{2}{c}{TIN} & \multicolumn{2}{c}{MNIST} & \multicolumn{2}{c}{SVHN} & \multicolumn{2}{c}{TEXTURE} & \multicolumn{2}{c}{PLACES365} & \multicolumn{2}{c}{\multirow{-2}{*}{AVG}} \\
Method                            & FPR95           & AUROC            & FPR95    & AUROC     & FPR95     & AUROC      & FPR95     & AUROC     & FPR95      & AUROC       & FPR95       & AUROC        & FPR95 $\downarrow$             & AUROC  $\uparrow$           \\
\midrule                            
MSP   \cite{msp}                            & 53.10             & 87.19             & 43.25      & 88.87      & 23.63       & 92.63       & 25.80       & 91.46      & 34.96        & 89.89        & 42.26         & 88.92         & 37.17                & 89.83              \\
MaxLogit  \cite{maxlogit}                        & 66.60             & 86.31             & 56.06      & 88.72      & 25.06       & 94.15       & 35.09       & 91.69      & 51.71        & 89.41        & 54.84         & 89.14         & 48.23                & 89.90              \\
Energy  \cite{energy}  & 66.59 & 86.36 & 56.08 & 88.8  & 24.99 & 94.32 & 35.15 & 91.78 & 51.84 & 89.47 & 54.86 & 89.25 & 48.25 & 90.00 \\
ODIN   \cite{odin}                           & 77.04             & 82.18             & 75.32      & 83.55      & 23.81       & \textbf{95.24}       & 68.61       & 84.58      & 67.74        & 86.94        & 70.35         & 85.07         & 63.81                & 86.26              \\
GEN \cite{GEN}   & 58.76 & 87.21 & 48.58 & 89.20  & 23.00    & 93.83 & 28.14 & 91.97 & 40.72 & 90.14 & 47.04 & 89.46 & 41.04 & 90.30 \\
Mahalanobis \cite{mahananobis}      & {52.81} & {83.59} & {47.01} & {84.81} & {27.30} & {90.10} & {25.95} & {91.19} & {27.92} & \textbf{92.69} & {47.66} & {84.90} & {38.11} & {87.88} \\
ReAct  \cite{react}    & {67.40} & {85.93} & {59.71} & {88.29} & {33.76} & {92.81} & {50.23} & {89.12} & {51.40} & {89.38} & {44.21} & {90.35} & {51.12} & {89.31} \\

SHE    \cite{she}                           & 80.99             & 80.31             & 78.27      & 82.76      & 42.23       & 90.43       & 62.75       & 86.37      & 84.59        & 81.57        & 76.35         & 82.89         & 70.86                & 84.06              \\
NCI    \cite{nci}                           & 52.46             & 87.84             & 42.92      & 89.50      & 28.94       & 92.08       & 31.70       & 90.67      & \textbf{27.59}        & {91.97}        & \textbf{35.65}         & \textbf{90.36}         & 36.54                & 90.40              \\
\rowcolor{gray!20}
LogitGap                         & \textbf{49.17}             & \textbf{88.01}             & \textbf{39.91}      & \textbf{89.70}      & \textbf{22.56}       & 93.44       & \textbf{25.41}       & \textbf{92.10}      & 33.00        & 90.73        & 39.38         & 89.74         & \textbf{34.91}                & \textbf{90.62}              \\

\midrule
OE \cite{oe} & {35.84} & {90.81} & \textbf{2.50}  & \textbf{99.25} & {25.89} & {90.63} & \textbf{1.10}  & \textbf{99.69} & {10.61} & {98.01} & {13.48} & {97.14} & {14.90} & {95.92} \\
\rowcolor{gray!20}
\quad+LogitGap & \textbf{35.18}             & \textbf{91.18}             & 2.81       & 99.21      & \textbf{24.96}       & \textbf{91.52}       & 1.18        & 99.65      & \textbf{9.76}         & \textbf{98.10}        & \textbf{13.20}         & \textbf{97.18}         & \textbf{14.52}                & \textbf{96.14}              \\
MixOE   \cite{mixoe}                          & 65.58             & 87.01             & 46.79      & 90.13      & 36.89       & 91.79       & 15.56       & \textbf{94.48}      & 33.40        & 92.00        & 38.87         & 91.18         & 39.52                & 91.10              \\
\rowcolor{gray!20}
\quad+LogitGap                       & \textbf{52.30}              & \textbf{88.22}              & \textbf{35.34}           & \textbf{90.81}              & \textbf{28.04}              & \textbf{92.26}             & \textbf{15.54}              & {94.46}              & \textbf{25.80}             & \textbf{92.45}              & \textbf{30.38}              & \textbf{91.53}             & \textbf{31.23} & \textbf{91.62} \\
\bottomrule
\end{tabular}
\label{tab:cifar10}
\vspace{-2.0em} 
\end{table}

\section{Conclusion}
In this work, we revisit the OOD detection problem from the perspective of logits space and reveal key limitations of existing logits-based scoring methods. To overcome these challenges, we propose LogitGap, a simple yet effective approach that enhances ID-OOD separability by explicitly leveraging the gap between the maximum logit and the remaining logits. Furthermore, we develop a lightweight, data-efficient strategy to automatically determine the optimal subset size for scoring. Extensive experiments show that LogitGap consistently outperforms existing methods across both traditional vision architectures and pre-trained vision-language models, while maintaining ease of deployment.

\textbf{Limitations and Broader Impact.} \ 
While LogitGap demonstrates strong performance in visual classification tasks, its current design is primarily tailored to visual models.  In future work, we plan to extend LogitGap to other modalities, such as text or speech, to enable robust OOD detection across a broader range of input types.

\section*{Acknowledgments}
This work is partially supported by National Science and Technology Major Project 2021ZD0111901 and National Natural Science Foundation of China 62376259, 62576334, 62306301.


\newpage


\clearpage

\appendix
\newpage
\section{Theoretical Analysis: LogitsGap vs Softmax Scaling}
\label{SecA}
In this section, we provide the proof of Theorem 4.1 from Section 4, which offers a detailed analysis of the relationship between LogitGap and MCM \cite{MCM}. We first introduce the necessary notation and definitions.

\textbf{Notations.} Formally, in the out-of-distribution problem, the decision function $D : \mathcal{X} \rightarrow \{\text{ID, OOD}\}$ is typically constructed as: 
\begin{equation}
D\left(\boldsymbol{x}\right)=\left\{
\begin{aligned}
&\text{ID},  & \text{if }S\left(\boldsymbol{x};f\right)\geq \lambda\\
&\text{OOD},  & \text{if }S\left(\boldsymbol{x};f\right) < \lambda\\
\end{aligned},
\right.
\end{equation}
where $S:\mathcal{X}\rightarrow \mathbb{R}$ is a scoring function based on pre-trained model $f$, which assigns a scalar confidence score to each input sample. The decision threshold $\lambda$ establishes the decision boundary: samples satisfying $S\left(\boldsymbol{x}\right) < \lambda$ are identified as OOD, while those with $S\left(\boldsymbol{x}\right) \geq \lambda$ are considered ID. 

We present the OOD scoring functions for LogitGap and MCM. Given a logit vector $\boldsymbol{z}$ predicted for a test sample $\boldsymbol{x}$, we sort its elements in descending order to obtain $\boldsymbol{z}'$.
LogitGap exploits the inherent distributional differences between ID and OOD logits \footnote{For ease of derivation, we adopt Equation \ref{appendix_eq:LogitGap} in place of $S_{\mathrm{LogitGap}}\left(\boldsymbol{x};f\right) = \frac{1}{K-1}{\sum_{j=2}^K{\left(z'_1-z'_j\right)}}$, without affecting performance.}:
\begin{equation}
    S_{\mathrm{LogitGap}}\left(\boldsymbol{x};f\right) = \frac{1}{K}{\sum_{j=1}^K{\left(z'_1-z'_j\right)}},
    \label{appendix_eq:LogitGap}
\end{equation}
where $z'_j$ denotes the $j$-th largest logit.

MCM \cite{MCM} applies softmax normalization to the logits and takes the maximum probability as the OOD score:
\begin{equation}
    S_{\mathrm{MCM}}\left(\boldsymbol{x};f\right)=\max_k \frac{e^{z_k/\tau}}{\sum_{j=1}^Ke^{z_j/\tau}},
\end{equation}
where $\tau$ is a temperature scaling parameter.

Furthermore, we give the OOD decision function based on the corresponding score:
\begin{equation}
D_\mathrm{LogitGap}\left(\boldsymbol{x}\right)=\left\{
\begin{aligned}
&\text{ID},  & \text{if }S_\mathrm{LogitGap}(\boldsymbol{x};f)\geq \lambda_\mathrm{LogitGap}\\
&\text{OOD},  & \text{if }S_\mathrm{LogitGap}(\boldsymbol{x};f)< \lambda_\mathrm{LogitGap}\\
\end{aligned},
\right.
\end{equation}

\begin{equation}
D_\mathrm{MCM}\left(\boldsymbol{x}\right)=\left\{
\begin{aligned}
&\text{ID},  & \text{if }S_\mathrm{MCM}(\boldsymbol{x};f) \geq \lambda_\mathrm{MCM}\\
&\text{OOD},  & \text{if }S_\mathrm{MCM}(\boldsymbol{x};f) < \lambda_\mathrm{MCM}\\
\end{aligned}.
\right.
\end{equation}

\textbf{Remarks.}
By convention, $\lambda_\mathrm{LogitGap}$ and $\lambda_\mathrm{MCM}$ are typically chosen such that the true positive rate is at 95\%. For brevity, we will use $\lambda_\mathrm{L}$ to denote $\lambda_{\mathrm{LogitGap}}$ throughout the rest of this paper.

\begin{theorem}
    \label{appthe:geq_mcm}
    Given a $K$-way classification task, let the predicted logit vector for a sample $\boldsymbol{x}$ be $\boldsymbol{z} = \left[z_1, z_2, \ldots, z_K\right]$. Let $\boldsymbol{z}' = \left[z'_1, z'_2, \ldots, z'_K\right]$ denote the sorted logits in descending order, such that $z'_1 = \max_{k} z_k$. 
    Then, for the temperature scaling parameter $\tau$ used in the softmax function, if $\tau > 2(K-1)$, 
    we have
    $$
    \operatorname{FPR_{LogitGap}}\left(\lambda_\mathrm{L}\right) \leq \operatorname{FPR_{MCM}}\left(\tau, \lambda_{\mathrm{MCM}}\right), 
    $$
    where $\operatorname{FPR_{LogitGap}}\left(\lambda_\mathrm{L}\right)$ is the false positive rate based on LogitGap with threshold $\lambda_L$. Similarly, $ \operatorname{FPR_{MCM}}\left(\tau, \lambda_{\mathrm{MCM}}\right)$ is the false positive rate based on MCM with temperature $\tau$ and threshold $\lambda_{\mathrm{MCM}}$. 
\end{theorem}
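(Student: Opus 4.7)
The plan is to follow the three-part strategy laid out in the proof sketch, with the intermediate score $S_{\mathrm{LM}}(\boldsymbol{x};f) = \frac{\sum_i [e^{z_1'/\tau}-e^{z_i'/\tau}]}{K\sum_j e^{z_j'/\tau}}$ acting as a bridge between MCM and LogitGap. First I would establish the exact algebraic identity
\begin{equation}
S_{\mathrm{LM}}(\boldsymbol{x}) \;=\; \frac{Ke^{z_1'/\tau} - \sum_i e^{z_i'/\tau}}{K\sum_j e^{z_j'/\tau}} \;=\; S_{\mathrm{MCM}}(\boldsymbol{x}) - \frac{1}{K},
\end{equation}
which directly yields $\mathrm{FPR}_{\mathrm{MCM}}(\tau,\lambda_{\mathrm{MCM}}) = \mathrm{FPR}_{\mathrm{LM}}(\tau,\lambda_{\mathrm{MCM}}-1/K)$. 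This reduces the theorem to the single inequality $\mathrm{FPR}_{\mathrm{LogitGap}}(\lambda_L)\leq \mathrm{FPR}_{\mathrm{LM}}(\tau,\lambda_L/\tau)$, with the MCM$\to$LM threshold translation absorbed into the $+1/K$ shift.

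Second, I would compare the LogitGap and LM scores under the substitution $u_j := (z_1'-z_j')/\tau \geq 0$ (with $u_1=0$), which puts them in the common form $S_{\mathrm{LogitGap}}/\tau = \tfrac{1}{K}\sum_j u_j$ and $S_{\mathrm{LM}} = \frac{\sum_i (1-e^{-u_i})}{K\sum_j e^{-u_j}}$. The goal is to show that the OOD super-level set $\{S_{\mathrm{LogitGap}}>\lambda_L\}$ is contained in $\{S_{\mathrm{LM}}>\lambda_L/\tau\}$ whenever $\tau>2(K-1)$, so that the FPR under $Q_x$ transfers monotonically. The natural tools are the convexity inequalities $1-e^{-u}\geq u - u^2/2$ and $e^{-u}\leq 1 - u + u^2/2$ for $u\geq 0$, applied termwise in the numerator and denominator of $S_{\mathrm{LM}}$, together with the ordering $u_1=0\leq u_2\leq\cdots\leq u_K$ from the descending sort.

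Third, once the set containment is in hand, the FPR inequality follows at once: setting $\lambda_{\mathrm{LM}}=\lambda_L/\tau$ gives $\mathrm{FPR}_{\mathrm{LogitGap}}(\lambda_L) \leq \mathrm{FPR}_{\mathrm{LM}}(\tau, \lambda_L/\tau) = \mathrm{FPR}_{\mathrm{MCM}}(\tau, \lambda_L/\tau + 1/K)$, and re-identifying the right-hand threshold as $\lambda_{\mathrm{MCM}}$ closes the argument.

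The hard part will be the second step. A naive pointwise comparison of $S_{\mathrm{LogitGap}}/\tau$ and $S_{\mathrm{LM}}$ fails: writing $A=\sum_j u_j$ and $B=\sum_j e^{-u_j}$, the bare inequality reduces to $B(1+A)\leq K$, which is patently false in the small-$u_j$ regime where $B\approx K-A$ and the two scores differ by a factor of $K$. The asymmetry is compensated only by the $1/K$ shift inherited from the MCM$\to$LM reduction and by the restriction $\tau>2(K-1)$, which forces $u_j$ to be small enough (relative to the raw logit range) for the second-order Taylor remainder $\sum_j u_j^2$ to be dominated by $\sum_j u_j$. I therefore expect the computational bulk of the appendix proof to be a careful Taylor-remainder or rearrangement argument that extracts the precise threshold $2(K-1)$ from balancing these quadratic corrections against the linear leading term in both the numerator and denominator of $S_{\mathrm{LM}}$ simultaneously.
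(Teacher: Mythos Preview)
Your first step—the algebraic identity $S_{\mathrm{LM}} = S_{\mathrm{MCM}} - 1/K$ and the resulting threshold shift $\lambda_{\mathrm{MCM}} = \lambda_{\mathrm{LM}} + 1/K$—matches the paper exactly. But your second step diverges from what the paper actually does, and your expectation that the appendix contains ``a careful Taylor-remainder or rearrangement argument'' is wrong.

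The paper's argument is much cruder and rests on an ingredient you never invoke: the explicit a~priori bound $z_j \in [-1,1]$ on CLIP logits. This gives $z_1' - z_j' \leq 2$ for every $j$, hence $\sum_{j=1}^K (z_1'-z_j') \leq 2(K-1)$ deterministically—\emph{this} is the origin of the threshold $2(K-1)$, not any Taylor balancing. The paper then runs a two-step chain of trivial set inclusions: first replace the random quantity $\sum_j(z_1'-z_j')$ by its constant upper bound $2(K-1)$ (so $\{S_{\mathrm{LogitGap}}/\tau > \lambda_L/\tau\} \subseteq \{\tfrac{2(K-1)}{\tau K} > \lambda_L/\tau\}$), then replace the constant $1/K$ by the random variable $S_{\mathrm{MCM}}$ (valid since the maximum softmax probability is always at least $1/K$). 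Chaining these yields $\mathrm{FPR}_{\mathrm{LogitGap}}(\lambda_L) \leq Q_x\bigl(S_{\mathrm{MCM}} > \tfrac{\lambda_L}{2(K-1)}\bigr)$, which is then compared to $\mathrm{FPR}_{\mathrm{MCM}}(\tau,\lambda_{\mathrm{MCM}})$ via a case split on whether $\lambda_{\mathrm{LM}} \lessgtr \lambda_L/\tau$.

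Your Taylor/convexity route would, if it worked, be more general (no logit-range assumption) and more honest about where the threshold comes from analytically. But you correctly diagnose that the naive pointwise comparison fails, and without the boundedness input there is no mechanism to force $u_j$ small—so as written your plan has a genuine gap at exactly the point you flag as ``the hard part.'' The paper closes that gap not by analysis but by assumption: it simply uses the CLIP output range, and the threshold $2(K-1)$ is nothing more than (diameter of $[-1,1]$) $\times$ (number of non-maximal classes).
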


\begin{proof}
        Let $Q_x$ denotes the out-of-distribution $P_{x|\mathrm{OOD}}$.
        By definition, we express the false positive rate of MCM, denoted as $\operatorname{FPR_{MCM}}(\tau, \lambda_\mathrm{MCM})$, in the following:
        \begin{align}
            \operatorname{FPR_{MCM}}(\tau, \lambda_\mathrm{MCM})&=Q_{x}\left(S_\mathrm{MCM}(\boldsymbol{x};f) > \lambda_\mathrm{MCM}\right)\notag\\
            &=Q_{x}\left(\frac{e^{z_1'/\tau}}{\sum_{j=1}^ke^{z_j'/\tau}} > \lambda_\mathrm{MCM}\right).
            \label{eq:fpr_mcm}
        \end{align}
        Next, we introduce a intermediate OOD score function, LogitGap with softmax (LogitGap\_softmax) to bridge the MCM and our LogitGap.
        We define the score function of LogitGap\_softmax as $S_\mathrm{LogitGap\_softmax}(\boldsymbol{x};f)=\frac{\sum_{i=1}^K[e^{z_1'/\tau}-e^{z_i'/\tau}]}{K\sum_{j=1}^Ke^{z_j'/\tau}}$.
        Therefore, the false positive rate of LogitGap\_softmax can be expressed as
        \begin{align}
            \operatorname{FPR_{LogitGap\_softmax}}(\tau, \lambda_\mathrm{LM})&=Q_{x}\left(S_\mathrm{LogitGap\_softmax}(\boldsymbol{x};f) > \lambda_\mathrm{LM}\right)\notag\\
            &=Q_{x}\left(\frac{\sum_{i=1}^K[e^{z_1'/\tau}-e^{z_i'/\tau}]}{K\sum_{j=1}^Ke^{z_j'/\tau}} > \lambda_\mathrm{LM}\right).\notag\\
            &=Q_{x}\left(\frac{e^{z_1'/\tau}}{\sum_{j=1}^Ke^{z_j'/\tau}}-\frac{1}{K}> \lambda_\mathrm{LM}\right)\notag\\
            &=Q_{x}\left(\frac{e^{z_1'/\tau}}{\sum_{j=1}^Ke^{z_j'/\tau}}> \lambda_\mathrm{LM}+\frac{1}{K}\right)
            \label{eq:fpr_lm}
        \end{align}
        Combining Eq.(\ref{eq:fpr_mcm}) and Rq.(\ref{eq:fpr_lm}), we have $\lambda_\mathrm{MCM}=\lambda_\mathrm{LM}+\frac{1}{K}$.
        Similarly, we can express the false positive rate of LogitGap as
        \begin{equation*}
            \begin{aligned}
                \operatorname{FPR_{LogitGap}}(\lambda_\mathrm{L})
            &= Q_{x}\left(S_\mathrm{LogitGap}(\boldsymbol{x};f) > \lambda_\mathrm{L}\right)\\
            &=  Q_{x}\left(\frac{\sum_{j=1}^K{(z'_1-z'_j)}}{K} > \lambda_\mathrm{L}\right)\\
            &=  Q_{x}\left(\frac{\sum_{j=1}^K{(z'_1-z'_j)}}{\tau*K} > \frac{1}{\tau}*\lambda_\mathrm{L}\right)\\
            \end{aligned}
        \end{equation*}
        
        Since the outputs of CLIP are bounded within $[-1, 1]$ \footnote{For non-CLIP models, assuming the output range is $[-A, A]$, we similarly obtain $\sum_{j=1}^K{(z'_1-z'_j)}\leq2A(K-1)$}, it follows that$$\sum_{j=1}^K{(z'_1-z'_j)} \leq 2(K-1).$$
        
        Furthermore, we have
        \begin{align}
                \operatorname{FPR_{LogitGap}}(\lambda_\mathrm{L})
            &=  Q_{x}\left(\frac{\sum_{j=1}^K{(z'_1-z'_j)}}{\tau*K} > \frac{1}{\tau}*\lambda_\mathrm{L}\right)\notag\\
            &\leq Q_{x}\left(\frac{2(K-1)}{\tau*K}  > \frac{1}{\tau}*\lambda_\mathrm{L}\right)\notag\\
            &= Q_{x}\left(\frac{2(K-1)}{\tau}*\frac{1}{K}  > \frac{1}{\tau}*\lambda_\mathrm{L}\right)\notag\\
            &\leq Q_{x}\left(\frac{2(K-1)}{\tau}*\frac{e^{z_1'/\tau}}{\sum_{j=1}^Ke^{z_j'/\tau}}>\frac{1}{\tau}*\lambda_\mathrm{L}\right) \notag\\
            &=Q_{x}\left(\frac{e^{z_1'/\tau}}{\sum_{j=1}^Ke^{z_j'/\tau}}>\frac{\lambda_\mathrm{L}}{2(K-1)}\right)
            \label{eq:fpr_l}
        \end{align}
        Next, for $\lambda_\mathrm{LM}$ and $\lambda_\mathrm{L}$, we consider two cases: (1) $\lambda_\mathrm{LM} \leq \frac{\lambda_\mathrm{L}}{\tau}$; (2) $\lambda_\mathrm{LM} > \frac{\lambda_\mathrm{L}}{\tau}$.
        
        For the case (1), with the assumption $\lambda_\mathrm{LM} \leq \frac{\lambda_\mathrm{L}}{\tau}$, we have
        \begin{align}
            \operatorname{FPR_{LogitGap}}(\lambda_\mathrm{L})&\leq Q_{x}\left(\frac{e^{z_1'/\tau}}{\sum_{j=1}^Ke^{z_j'/\tau}}>\frac{\tau*\lambda_\mathrm{LM}}{2(K-1)}\right)
            \label{eq:fpr_l_case1}
        \end{align}
        By comparing Eq.(\ref{eq:fpr_mcm}) and Eq.(\ref{eq:fpr_l_case1}), we can derive that $\operatorname{FPR_{LogitGap}}(\lambda_\mathrm{L})\leq\operatorname{FPR_{MCM}}(\tau,\lambda_\mathrm{MCM})$, if the following condition holds:
        \begin{equation}
            \frac{\tau*\lambda_\mathrm{LM}}{2(K-1)} \geq \lambda_\mathrm{MCM}.
            \label{eq:condition}
        \end{equation}
        Note that, $\lambda_{LM} = \lambda_{MCM} - \frac{1}{K}$. 
        Combining this equation with Eq.(\ref{eq:condition}), we have
        \begin{align}
            \tau \geq 2(K-1)*\frac{\lambda_\mathrm{MCM}}{\lambda_\mathrm{LM}}
            > 2(K-1)
            \label{eq:condition_v1}
        \end{align}
        
        For the case (2), by directly comparing Eq.(\ref{eq:fpr_mcm}) and Eq.(\ref{eq:fpr_l}), we can derive that $\operatorname{FPR_{LogitGap}}(\lambda_\mathrm{L})\leq\operatorname{FPR_{MCM}}(\tau,\lambda_\mathrm{MCM})$, if the following condition holds:
        \begin{equation}
            \frac{\lambda_\mathrm{L}}{2(K-1)} \geq \lambda_\mathrm{MCM}.
            \label{eq:condition_v2}
        \end{equation}
        In the case (2), we have $\lambda_\mathrm{LM} > \frac{\lambda_\mathrm{L}}{\tau}$.
        With this inequation, Eq.(\ref{eq:condition_v2}) can be rewitten as
        \begin{align}
            \frac{\tau*\lambda_\mathrm{LM}}{2(K-1)} >\frac{\lambda_\mathrm{L}}{2(K-1)} \geq \lambda_\mathrm{MCM}.
            \label{eq:condition_v3}
        \end{align}
        Note that Eq.(\ref{eq:condition_v3}) is the same as Eq.(\ref{eq:condition_v1}), thus we can derive the same conclusion.
\end{proof}

\newpage

\section{Analysis: Logits Distribution Differences Between ID and OOD Samples}
\label{SecB}

\subsection{Motivation Example}
To illustrate a key limitation of the softmax-based scoring function  
MCM \cite{MCM}, we present two examples where  \textit{logit distributions differ significantly} yet yield 
\textit{nearly identical MCM scores}. This reveals how the softmax operation can suppress informative structural differences in the logit space, which are often critical for distinguishing ID and OOD samples. 
Consider a 3-way classification problem with two test samples, $\boldsymbol{x^1}$ and $\boldsymbol{x^2}$, whose corresponding logit vectors are $\boldsymbol{z^1}=\left[0.5596,-0.9808,-0.9808\right]$ and $\boldsymbol{z^2}=\left[0.9783, -0.6311, -0.4976\right]$, respectively. Despite the significant difference in magnitude and sharpness ($\boldsymbol{z^2}$ exhibits a much more confident prediction), the maximum softmax probability for both is identical:
$\mathrm{softmax}\left(\boldsymbol{z^1}\right)=\left[\mathbf{0.70}, 0.15, 0.15\right]$ and $\mathrm{softmax}\left(\boldsymbol{z^2}\right)=\left[\mathbf{0.70}, 0.14, 0.16\right]$. 

This arises because softmax normalizes logits into a probability distribution, preserving their relative ordering while discarding absolute magnitudes. 
As a result, distinct logit patterns can be mapped to probability vectors with similar maximum values, thereby limiting the discriminative power of softmax-based OOD detection.
This issue is further exacerbated by temperature scaling $\tau > 1$, which flattens the softmax distribution and amplifies information loss.
This observation naturally raise a key question: \emph{How can we more effectively leverage the discriminative information embedded in non-maximum logits to enhance ID-OOD separability?}

\subsection{Motivation Evidence}
\label{app:motivation_evidence}
\begin{wrapfigure}{r}{0.35\textwidth}
  \vspace{-15pt}  
  \centering
  \includegraphics[width=0.31\textwidth]{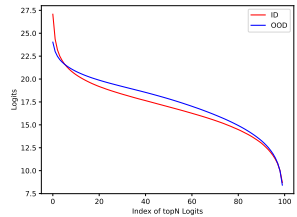}
    \caption{Descending-sorted logits on CLIP ViT-B/16 with ImageNet100 (ID) and iNaturalist (OOD).}
  \label{appendix_fig:sorted_logits}
  \vspace{-10pt}  
\end{wrapfigure}

We observe an interesting phenomenon: the relationship between the maximum logit and the remaining logits differs notably between in-distribution (ID) and out-of-distribution (OOD) samples. As shown in Figure \ref{appendix_fig:sorted_logits}, ID samples tend to have a larger maximum logit value than OOD samples. Conversely, for the non-maximum logits, OOD samples typically exhibit higher values than ID samples. 

To verify the generality of this phenomenon, we conduct experiments across various ID/OOD datasets and model architectures in Table \ref{app_tab:observation}. Specifically, we report the average maximum logit, and the average logit of non-predicted classes, using ResNet-50, ViT-B/16, and ViT-L/14 as backbone models. 
\begin{table}
\centering
\caption{ Logit distribution differences between ID and OOD data. $z'_1$ and $\bar{z}'_N$ denote the maximum logit and the average logit across all non-predicted classes, respectively.}
\label{app_tab:observation}
\begin{subtable}[t]{\linewidth}
\centering
\begin{tabular}{lcc}
\toprule
              & $z'_1$ &  $\bar{z}'_N$ \\
\midrule
CIFAR-10 (ID) & 9.02     & -0.12                                   \\
CIFAR-100     & 7.69    & 0.40                                     \\
TIN           & 7.19    & 0.57                                     \\
Textures       & 5.54    & 0.85                                     \\
SVHN          & 5.51    & 0.84         \\
\bottomrule
\end{tabular}
\caption{Logit distribution on ResNet-50 using CIFAR-10 as ID dataset.}
\end{subtable}

\begin{subtable}[t]{\linewidth}
\centering
\begin{tabular}{lcc}
\toprule
 & $z'_1$ &  $\bar{z}'_N$ \\
\midrule
ImageNet (ID) & 30.82     & 17.32                                      \\
iNaruralist   & 26.48     & 17.89                                      \\
SUN           & 26.71     & 17.90                                      \\
Textures      & 28.72     & 18.94                                      \\
\bottomrule
\end{tabular}
\caption{Logit distribution on CLIP ViT-B/16 using ImageNet as ID dataset.}
\end{subtable}

\begin{subtable}[t]{\linewidth}
\centering
\begin{tabular}{lcc}
\toprule
 & $z'_1$ &  $\bar{z}'_N$ \\
\midrule
ImageNet (ID) & 25.76     & 11.19                                      \\
iNaturalist   & 20.72     & 11.65                                      \\
Textures      & 23.85     & 13.18                                      \\
ImageNet-O    & 21.14     & 12.16                                      \\
\bottomrule
\end{tabular}
\caption{Logit distribution on CLIP ViT-L/14 using ImageNet as ID dataset.}
\end{subtable}

\end{table}

\subsection{Theoretical Analysis}

Our theoretical motivation stems from the observed distinction in logit patterns between ID and OOD samples: (i) Higher maximum logit values for ID samples; (ii) Higher non-predicted class logits in OOD samples.
This indicates that the gap between the maximum logit and the remaining logits is generally wider for ID samples than for OOD samples. 

To understand the rationale behind this, we provide a simple analysis from the theoretical perspective.

\begin{theorem}
    \label{proof:logits-relationship}
    In a binary classification problem, given a well-trained feature extractor $\phi$ and a classifier $W$, we assume that the feature distribution of ID samples for class $i$ follows a Gaussian distribution, $\phi(\boldsymbol{x_\mathrm{ID}}|y=0)\sim\mathcal{N}(\mu_0, \Sigma_0)$ and $\phi(\boldsymbol{x_\mathrm{ID}}|y=1)\sim\mathcal{N}(\mu_1, \Sigma_1)$.
    We further assume that the features of OOD samples can be modeled as an interpolation of two ID feature distribution with a Gaussian noise, \ie $\phi(\boldsymbol{x_\mathrm{OOD}})= \alpha \cdot \mathcal{N}(\mu_1,\Sigma_1) + (1-\alpha) \cdot \mathcal{N}(\mu_0,\Sigma_0) + \beta \cdot \mathcal{N}(0,\Sigma)$. 
    Let $\boldsymbol{z^\mathrm{ID}}=[z^\mathrm{ID}_0, z_1^\mathrm{ID}]$ denotes the predicted logit vector for ID sample $\boldsymbol{x_\mathrm{ID}}$, $\boldsymbol{z^\mathrm{OOD}}=[z^\mathrm{OOD}_0, z_1^\mathrm{OOD}]$ denotes the predicted logit vector for OOD sample $\boldsymbol{x_\mathrm{OOD}}$.
    If the ID sample $\boldsymbol{x_\mathrm{ID}}$ belongs to class $i$, we have
    \begin{equation}
    \mathbb{E}[z^\mathrm{ID}_{1-i}] < \mathbb{E}[z^\mathrm{OOD}_{1-i}]
    \end{equation}
    \end{theorem}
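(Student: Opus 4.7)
The plan is to reduce the claim to a routine expectation computation and then invoke a ``well-trained classifier'' assumption to finish. Let $W=[w_0, w_1]$ denote the classifier weights so that $z_k = w_k^\top \phi(\boldsymbol{x})$. I would first make explicit the single quantitative consequence of ``well-trained'': for the binary classifier trained on ID features with $\phi(\boldsymbol{x}_{\mathrm{ID}}|y=j)\sim \mathcal{N}(\mu_j,\Sigma_j)$, the within-class logit should dominate the cross-class logit on average, i.e.\ $w_0^\top \mu_0 > w_0^\top \mu_1$ and $w_1^\top \mu_1 > w_1^\top \mu_0$. This is the only structural assumption I need.

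Next I would compute both sides of the target inequality using linearity of expectation. For an ID sample of class $i$,
\begin{equation}
\mathbb{E}[z^{\mathrm{ID}}_{1-i}] \;=\; w_{1-i}^\top \mu_i.
\end{equation}
For the OOD mixture $\phi(\boldsymbol{x}_{\mathrm{OOD}})=\alpha\,\mathcal{N}(\mu_1,\Sigma_1)+(1-\alpha)\,\mathcal{N}(\mu_0,\Sigma_0)+\beta\,\mathcal{N}(0,\Sigma)$, the Gaussian-noise term contributes $0$ in expectation, giving
\begin{equation}
\mathbb{E}[z^{\mathrm{OOD}}_{1-i}] \;=\; \alpha\, w_{1-i}^\top \mu_1 \;+\;(1-\alpha)\, w_{1-i}^\top \mu_0.
\end{equation}

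I would then conclude by case analysis on $i\in\{0,1\}$. For $i=0$, the desired inequality becomes $w_1^\top\mu_0 < \alpha\,w_1^\top\mu_1+(1-\alpha)\,w_1^\top\mu_0$, which simplifies to $\alpha\,(w_1^\top\mu_1-w_1^\top\mu_0)>0$, true by the well-trained assumption and $\alpha\in(0,1)$. For $i=1$, it simplifies symmetrically to $(1-\alpha)\,(w_0^\top\mu_0-w_0^\top\mu_1)>0$, again true. Both cases combine into the stated claim.

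The main obstacle is not algebraic but conceptual: the theorem is sensitive to how one formalizes ``well-trained'' and to the assumption $\alpha\in(0,1)$ (and $\beta$ not so large as to dominate). I would therefore state these as explicit hypotheses at the outset, note that the covariances $\Sigma_0,\Sigma_1,\Sigma$ play no role because only first moments enter the logit expectations, and remark that the argument generalizes immediately to any convex combination of ID class means plus zero-mean noise. No softmax or temperature considerations are needed here, which keeps the proof self-contained and independent of Theorem~\ref{appthe:geq_mcm}.
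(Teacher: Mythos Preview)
Your proposal is correct and follows essentially the same approach as the paper: compute both expectations via linearity (the noise term vanishes), then invoke the well-trained assumption $w_{1-i}^\top\mu_i < w_{1-i}^\top\mu_{1-i}$ to conclude that $w_{1-i}^\top\mu_i$ lies strictly below the convex combination $\alpha\,w_{1-i}^\top\mu_1+(1-\alpha)\,w_{1-i}^\top\mu_0$. Your write-up is in fact a bit more careful than the paper's, since you make the needed hypothesis $\alpha\in(0,1)$ explicit and note that the covariances play no role.
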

    
    \begin{proof}
    
    If the ID sample $\boldsymbol{x_\mathrm{ID}}$ belongs to class $i$, we have
    \begin{equation}
    \mathbb{E}[z^\mathrm{ID}_{1-i}] = w_{1-i}^T \mathbb{E}[\phi(\boldsymbol{x_\mathrm{ID}})] = w_{1-i}^T \mu_{i}
    \end{equation}

    Similarly, we have
    \begin{equation}
        \mathbb{E}[z^\mathrm{OOD}_{1-i}] = w_{1-i}^T \mathbb{E}[\phi(\boldsymbol{x_\mathrm{OOD}})].
    \end{equation}
    Assuming the features of $\boldsymbol{x_\mathrm{OOD}}$ can be denoted as the interpolation of two ID feature distribution with a random Gaussian noise, \ie $\phi(\boldsymbol{x_\mathrm{OOD}})= \alpha \cdot \mathcal{N}(\mu_1,\Sigma_1) + (1-\alpha) \cdot \mathcal{N}(\mu_0,\Sigma_0) + \beta \cdot \mathcal{N}(0,\Sigma)$, we have
    \begin{align}
        \mathbb{E}[z^\mathrm{OOD}_{1-i}] &= w_{1-i}^T \mathbb{E}[\phi(\boldsymbol{x_\mathrm{OOD}})]\notag\\
        &=w_{1-i}^T(\alpha \cdot \mu_1 + (1-\alpha) \cdot \mu_0)\notag\\
        &=\alpha \cdot w_{1-i}^T \mu_1 + (1-\alpha) \cdot w_{1-i}^T \mu_0.
    \end{align}
    We assume that the classifier $W$ is well trained, thus $w_{1-i}^T\mu_i<w_{1-i}^T\mu_{1-i}$.
    Then, we have
    \begin{equation}
    w_{1-i}^T \mu_{i} < \alpha \cdot w_{1-i}^T \mu_1 + (1-\alpha) w^T_{1-i} \mu_0.
    \label{ineq_compare}
    \end{equation}
    Therefore, we have
    \begin{equation}
    \mathbb{E}[z^\mathrm{ID}_{1-i}] < \mathbb{E}[z^\mathrm{OOD}_{1-i}].
    \end{equation}
    \end{proof}



\newpage

\section{Experimental Details}
\label{SecC}
\subsection{Implementation Details}
We run all OOD detection experiments on NVIDIA GeForce RTX-4090Ti GPUs with Pytorch 2.3.1.
For CLIP-based OOD detection, we adopt the pre-trained ViT-B/16 \cite{vit-b16} model from CLIP \cite{clip}. For conventional OOD detection task using CIFAR-10 \cite{cifar-10} as ID dataset, we use a ResNet-50 \cite{resnet50} model pre-trained on the CIFAR-10 training set as the backbone.

\subsection{Datasets}
\paragraph{ImageNet-10, ImageNet-20, ImageNet100}
\cite{MCM} creates ImageNet-10 that mimics the class distribution of CIFAR-10 \cite{cifar-10} but with high-resolution images. For semantically hard OOD evaluation with realistic datasets, \cite{MCM} curates ImageNet-20, which consists of 20 classes semantically similar to ImageNet-10 (e.g., dog (ID) vs. wolf (OOD)). ImageNet-100 is created by randomly sample 100 classes from ImageNet-1k \cite{imagenet}.

\paragraph{NINCO}
The NINCO \cite{ninco} main dataset comprises 64 OOD classes with a total of 5,879 samples. These classes were carefully selected to ensure no categorical overlap with any of the ImageNet-1K \cite{imagenet} classes. Additionally, each sample was manually inspected by the authors to confirm the absence of ID objects, making NINCO a reliable benchmark for evaluating out-of-distribution detection on ImageNet-1K.

\paragraph{ImageNetOOD}
ImageNet-OOD \cite{ImageNetOOD} is a clean, manually curated, and diverse dataset containing 31,807 images from 637 classes, designed for evaluating semantic shift detection using ImageNet-1K \cite{imagenet} as the in-distribution (ID) dataset. To minimize covariate shifts, images are sourced directly from ImageNet-21K \cite{imagenet21k}, with human verification ensuring the removal of any ID contamination from ImageNet-1K. The dataset also addresses multiple sources of semantic ambiguity caused by inaccurate hierarchical relationships in ImageNet labels and eliminates visually ambiguous images stemming from inconsistencies in ImageNet’s data curation process.

\paragraph{ImageNet-O}
ImageNet-O is a dataset containing image concepts absent from ImageNet-1K \cite{imagenet}, specifically designed to evaluate the robustness of ImageNet models. These out-of-distribution (OOD) images consistently cause models to misclassify them as high-confidence in-distribution examples. As the first anomaly and OOD dataset tailored for testing ImageNet-1K models, ImageNet-O provides a valuable benchmark for assessing OOD detection performance under label distribution shifts.

\paragraph{ImageNet-A}
ImageNet-A \cite{imagenet-ao} contains images sampled from a distribution distinct from the standard ImageNet-1k \cite{imagenet} training distribution. Although its examples belong to existing ImageNet-1k classes, they are deliberately more challenging, frequently leading to misclassifications across a range of models. ImageNet-A enables we to test image classification performance when the input data has covariate distribution shifts.

\paragraph{ImageNet-R}
ImageNet-R (Rendition) \cite{imagenet-r} consists of artistic and non-photographic renditions of ImageNet-1k \cite{imagenet} classes, including art, cartoons, graffiti, embroidery, graphics, origami, paintings, patterns, plastic figures, plush toys, sculptures, sketches, tattoos, video game assets, and more. The dataset covers 200 ImageNet-1k classes with a total of 30,000 images, offering a diverse benchmark for evaluating model robustness to distribution shifts in visual style and appearance.

\paragraph{ImageNet-Sketch}
The ImageNet-Sketch \cite{imagenet-s} dataset contains 50,889 images, with approximately 50 images per class for all 1,000 ImageNet-1k \cite{imagenet} categories. It was constructed by querying Google Images using the phrase “sketch of [class name],” restricted to a black-and-white color scheme. An initial set of 100 images was collected for each class, followed by manual filtering to remove irrelevant images and those belonging to visually similar but incorrect classes. For categories with fewer than 50 valid images after cleaning, data augmentation techniques such as flipping and rotation were applied to balance the dataset.

\subsection{Logits Selection Methods}
\paragraph{Logits Selection with a Fixed Hyperparameter $N$}
The effectiveness of our proposed LogitGap method relies on the selection of an informative region within the logits space, which is governed by a hyperparameter $N$. 
To determine an appropriate value for $N$, we consider the number of classes in the dataset.
For datasets with a large number of classes (e.g., ImageNet and ImageNet-100), we set $N$ to 20\% of the total number of classes $K$.
In contrast, for datasets with fewer classes (e.g., ImageNet-10 and ImageNet-20), we set $N$ to 50\% of $K$.
This strategy is motivated by the following intuition: when $K$ is large, using a lower proportion (e.g., 20\%) helps reduce the influence of noise; whereas when $K$ is small, a higher proportion (e.g., 50\%) preserves more useful information.
Moreover, our empirical results demonstrate a consistent performance improvement when $N$ is set within the 20\%–50\% range of the total number of classes across different datasets.
The specific $N$ values chosen for different datasets are summarized in Table~\ref{app_tab:setn}.

\paragraph{Logits Selection with an Adaptive Hyperparameter $N$}
\label{syn_OOD}
To enhance LogitGap, we introduce an improved strategy to adaptively choice the value of $N$.
To this end, we firstly construct a small validation set randomly sampled from the in-distribution (ID) data, with a fixed size of 100 samples. Using the model’s feature extractor, we obtain the image features of these ID samples. We then perform random inter-class interpolation on these features to generate synthetic OOD samples. To further increase the diversity of these synthetic samples, Gaussian noise is added to the interpolated features. The interpolation process is formally defined as follows:
\begin{equation}
    \boldsymbol{x_\mathrm{OOD}} = \alpha \cdot \boldsymbol{x_i} + (1-\alpha) \cdot \boldsymbol{x_j}+\beta \cdot \mathcal{N}(0,I),
\end{equation}
where $\boldsymbol{x_i}$ and $\boldsymbol{x_j}$ represent two samples from the ID validation set, $\alpha$ denotes the mixing coefficient between samples, while $\beta$ controls the weight of the added noise.

In our experiments, we generate a set of synthetic OOD samples equal in size to the ID validation set to serve as the OOD validation set. For datasets with a large number of categories, such as ImageNet \cite{imagenet} and ImageNet-100 \cite{MCM}, we set the interpolation parameters to $\alpha=0.3$ and $\beta=0.8$. For smaller-scale datasets like ImageNet-10 \cite{MCM} and ImageNet-20 \cite{MCM}, we set $\alpha=0.3$ and $\beta=0.0$. We then compute the logits scores for both the ID and synthetic OOD validation samples and determine the optimal value of $N$ adaptively based on the following criterion:
\begin{equation}
    \arg \max_N \left( {\mathbb{E}_{\boldsymbol{x}\sim P_{\mathrm{OOD}}}[\bar{z}'_N] - {\mathbb{E}_{\boldsymbol{x}\sim P_{\mathrm{ID}}}[\bar{z}'_N]}}\right),
\end{equation}
where $\bar{z}'_N$ represents the mean of the logits ranked from second to the $N$-th largest. 

\begin{table}[]
\centering
\caption{The value of $N$ for two methods on different datasets.}
\begin{tabular}{lcccc}
\toprule
             & $K$    & $N$ (Fixed)   & $N$ (Adaptive) \\
\midrule
ImageNet     & 1000  & 200 & 88       \\
ImageNet-100 & 100   & 20  & 20       \\
ImageNet-10  & 10    & 5   & 4        \\
ImageNet-20  & 20    & 10  & 6       \\
\bottomrule
\end{tabular}
\label{app_tab:setn}
\end{table}

\subsection{Implementation Details on LogitGap Combined with Negative-Prompt-Based OOD Method}
Negative-Prompt-Based OOD detection methods jointly train both positive and negative prompts to separately capture the characteristics of ID and OOD samples. In these methods, the model’s predicted logits take the form $\boldsymbol{z} = [z^\mathrm{ID}_1, z^\mathrm{ID}_2, ..., z^\mathrm{ID}_K, z^\mathrm{OOD}_1, z^\mathrm{OOD}_2, ..., z^\mathrm{OOD}_M]$, where $z^\mathrm{ID}_i$ denotes the logit for the $i$-th ID class, and $z^\mathrm{OOD}_j$ denotes the logit for the $j$-th OOD class. Since negative prompts tend to increase the logits of OOD samples, directly applying LogitGap over the entire logits space is not appropriate. In our experiments, when combining ID-Like \cite{idlike} with LogitGap, we rely solely on the ID logits, $\boldsymbol{z^\mathrm{ID}}=[z^\mathrm{ID}_1, z^\mathrm{ID}_2, ..., z^\mathrm{ID}_K]$, to compute the LogitGap score.

\newpage
\clearpage

\section{Experimental Results}
\label{SecD}
\subsection{Results on Traditional OOD Detection with CIFAR100 as ID Dataset}
To further evaluate the effectiveness and generality of our LogitGap, we conduct experiments in traditional OOD detection settings with CIFAR100 \cite{cifar-10} as ID dataset. Following the OpenOOD \cite{openood} protocol, we adopt the standard OpenOOD benchmark splits for OOD evaluation. The OOD benchmarks include both \textbf{near-OOD datasets}: CIFAR-10 \cite{cifar-10}, Tiny ImageNet \cite{tinyimagenet}, and \textbf{far-OOD datasets}: MNIST \cite{mnist}, SVHN \cite{svhn}, Textures \cite{texture}, and Places365 \cite{places}.  
In this setup, we employ a ResNet-50 \cite{resnet50} backbone trained from scratch on ID data using cross-entropy loss. We compare LogitGap against a comprehensive suite of baselines: (i) Logit-based post-hoc methods: MSP \cite{msp}, MaxLogit \cite{maxlogit}, KL-Matching \cite{kl-matching}, ODIN \cite{odin}, IODIN \cite{iodin}; (ii) Internal network statistics-based methods: Mahananobis \cite{mahananobis}, ASH \cite{ash}, SHE (Hopfield energy) \cite{she}; (iii) Training-based methods with auxiliary data: Outlier Exposure (OE) \cite{oe}, MixOE \cite{mixoe}. The results are summarized in Table \ref{tab:cifar100}.

As shown in Table \ref{tab:cifar100}, our proposed LogitGap demonstrates superior performance across all evaluated benchmarks. We observe that: (1) Among logit-based methods, IODIN \cite{iodin} achieves the best performance. ODIN \cite{odin} enhances OOD detection by introducing input perturbations to amplify the difference between ID and OOD samples in the softmax output. Building upon this, IODIN proposes to mask low-magnitude regions and perturb only invariant features, effectively encouraging the model to ignore environmental factors and focus on more informative regions. This requires calculating an additional invariant feature mask on top of the gradient computation in ODIN. In contrast, LogitGap improves detection performance by applying a simple transformation in the output space, reducing the FPR95 by $1.98\%$ and $0.78\%$ compared to MSP \cite{msp} and IODIN, respectively. (2) Internal network statistics-based methods generally perform worse because they heavily rely on the representation quality and discriminative ability of the internal network. When the backbone lacks sufficient expressive power, the extracted feature distributions of ID and OOD samples tend to overlap, making it difficult to effectively distinguish between them. (3) When combined with methods  that leverage extra out-of-distribution data, such as OE  \cite{oe} and MixOE \cite{mixoe}, LogitGap further enhances performance over their respective baselines. For example, a straightforward substitution of the OOD score function with LogitGap leads to a 3.57\% and 1.23\% performance gain in FPR95 and AUROC respectively over the original OE \cite{oe} method. These experiments highlight the advantages of LogitGap in terms of ease of integration and wide applicability across different settings.
\begin{table}[h]
\caption{OOD Detection performance on ResNet-50 using CIFAR-100 as the ID dataset.}
\setlength{\tabcolsep}{1.0pt}
\scriptsize
\begin{tabular}{lcccccccccccccc}
\toprule
            & \multicolumn{4}{c}{{Near OOD}}                                  & \multicolumn{8}{c}{Far OOD}                                                                                                           & \multicolumn{2}{c}{\multirow{2}{*}{AVG}} \\
\cmidrule(lr){2-5} \cmidrule(lr){6-13} 
Method      & \multicolumn{2}{c}{{CIFAR10}} & \multicolumn{2}{c}{TIN}         & \multicolumn{2}{c}{MNIST}       & \multicolumn{2}{c}{SVHN}        & \multicolumn{2}{c}{Textures}     & \multicolumn{2}{c}{PLACES365}   & \multicolumn{2}{c}{}                     \\
            & FPR95           & AUROC         & FPR95        & AUROC       & FPR95        & AUROC       & FPR95        & AUROC       & FPR95        & AUROC       & FPR95        & AUROC       & FPR95  $\downarrow$           & AUROC   $\uparrow$        \\
\midrule
MSP \cite{msp}         & 58.90             & 78.47            & 50.70          & 82.07          & 57.23          & 76.08          & 59.07          & 78.42          & 61.89          & 77.32          & 56.63          & 79.23          & 57.40               & 78.60              \\
MaxLogit \cite{maxlogit}    & 59.11             & 79.21            & 51.83          & 82.90          & 52.94          & 78.91          & 53.90          & 81.65          & 62.39          & 78.39          & 57.67          & 79.75          & 56.31               & 80.14              \\
ODIN \cite{odin}        & 60.63             & 78.18            & 55.21          & 81.63          & \textbf{45.94} & \textbf{83.79} & 67.43          & 74.54          & 62.37          & 79.34          & 59.73          & 79.45          & 58.55               & 79.49              \\
IODIN \cite{iodin}      & 59.09             & 79.24            & 51.57          & 82.96          & 52.93          & 78.89          & 54.06          & 81.56          & 62.07          & 78.48          & 57.47          & 79.83          & 56.20               & 80.16              \\
KL-Matching \cite{kl-matching}  & 84.77             & 73.92            & 70.99          & 79.21          & 72.88          & 74.15          & 50.31          & 79.32          & 81.80          & 75.76          & 81.62          & 75.68          & 73.73               & 76.34              \\
Mahalanobis \cite{mahananobis}       & 88.00             & 55.87            & 79.04          & 61.50          & 71.71          & 67.47          & 67.22          & 70.67          & 70.49          & 76.26          & 79.60          & 63.15          & 76.01               & 65.82              \\
ASH \cite{ash}       & 68.07             & 76.47            & 63.37          & 79.92          & 66.58          & 77.23          & \textbf{45.98} & \textbf{85.60} & 61.29          & \textbf{80.72} & 62.96          & 78.76          & 61.38               & 79.78              \\
SHE  \cite{she}       & 60.41             & 78.15            & 57.73          & 79.74          & 58.78          & 76.76          & 59.16          & 80.97          & 73.28          & 73.64          & 65.26          & 76.30          & 62.44               & 77.59              \\
\rowcolor{gray!20}
LogitGap   & \textbf{58.70}    & \textbf{79.43}   & \textbf{50.01} & \textbf{83.31} & 52.49          & 78.60          & 54.84          & 81.07          & \textbf{60.34} & 78.86          & \textbf{56.12} & \textbf{80.41} & \textbf{55.42}      & \textbf{80.28}     \\
\midrule
OE \cite{oe}         & 63.87             & 74.64            & \textbf{0.41}           & \textbf{99.88  }        & 35.03          & 91.28          & 56.24          & 85.73          & 52.83          & 83.98          & 60.49          & 76.89          & 44.81               & 85.40              \\
\rowcolor{gray!20}
\quad+LogitGap & \textbf{63.29}             & \textbf{75.73}            & 0.42           & \textbf{99.88}          & \textbf{24.96 }         & \textbf{92.93  }        & \textbf{50.28}          & \textbf{87.68 }         & \textbf{49.82}          & \textbf{85.37 }         & \textbf{58.67}          & \textbf{78.21}          & \textbf{41.24}               & \textbf{86.63}              \\
\midrule
MixOE  \cite{mixoe}     & \textbf{61.09  }           & 78.18            & 49.43          & 83.93          & 68.04          & \textbf{70.06}          & 76.72          & 73.06          & 66.37          & 78.03          & 56.10          & 80.44          & 62.96               & 77.28              \\
\rowcolor{gray!20}
\quad+LogitGap & {61.31}             & \textbf{78.41}            & \textbf{48.41}          & \textbf{84.05}          & \textbf{67.43}          & 69.98          & \textbf{75.10 }         & \textbf{73.22}          & \textbf{65.58}          & \textbf{78.22}          & \textbf{55.31}          & \textbf{80.69 }         & \textbf{62.19}               & \textbf{77.43}           \\
\bottomrule
\end{tabular}
\label{tab:cifar100}
\end{table}

\subsection{Results on Traditional OOD Detection with ImageNet as ID Dataset}
In the traditional OOD detection setting, we further evaluate the performance of LogitGap using a ResNet-50 \cite{resnet50} backbone on the large-scale ImageNet \cite{imagenet} dataset. Following the OpenOOD \cite{openood} protocol, we adopt SSB-hard \cite{ssbhard} and NINCO \cite{ninco} as \textbf{near-OOD} datasets,  iNaturalist\cite{inaturalist}, Textures\cite{texture}, and OpenImage-O \cite{vim} as \textbf{far-OOD} datasets.
In this setup, the ResNet-50 model \cite{resnet50} is trained from scratch on the ID data using cross-entropy loss. We compare LogitGap against several representative post-hoc OOD detection methods, including MSP \cite{msp}, MaxLogit \cite{maxlogit}, KL-Matching \cite{kl-matching}, ODIN \cite{odin}, IODIN \cite{iodin}, Energy \cite{energy}, and GradNorm \cite{gradnorm}.

As summarized in Table \ref{tab:IN-tradition-rn50}, LogitGap achieves comparable or superior performance across all benchmarks and obtains the best overall average performance, demonstrating its robustness and effectiveness in the traditional OOD detection scenario.

\begin{table}[]
\centering
\caption{OOD Detection performance on ResNet-50 using ImageNet as the ID dataset.}
\setlength{\tabcolsep}{1.1pt}
\scriptsize
\begin{tabular}{lcccccccccccc}
\toprule
                 & \multicolumn{4}{c}{Near OOD}                              & \multicolumn{6}{c}{Far OOD}                                                                     & \multicolumn{2}{c}{\multirow{2}{*}{AVG}} \\
                 \cmidrule(lr){2-5} \cmidrule(lr){6-11} 
Method           & \multicolumn{2}{c}{SSB-hard} & \multicolumn{2}{c}{NINCO} & \multicolumn{2}{c}{iNaturalist} & \multicolumn{2}{c}{Textures} & \multicolumn{2}{c}{OpenImage-O} & \multicolumn{2}{c}{}                     \\
                 & FPR95        & AUROC        & FPR95      & AUROC      & FPR95         & AUROC         & FPR95       & AUROC       & FPR95         & AUROC         & FPR95              & AUROC             \\
\midrule
MSP  \cite{msp}                 & \textbf{73.6} & \textbf{73.2} & 54.13          & 82.05          & 29.55          & 91.70           & 50.59          & 87.21          & 37.44          & 88.94          & 49.06               & 84.62              \\
MaxLogit  \cite{maxlogit}          & 76.19         & 72.51         & 59.49          & 80.41          & 30.59          & 91.16          & 46.12          & 88.39          & 37.88          & \textbf{89.17} & 50.05               & 84.33              \\
KL-Matching   \cite{kl-matching}              & 84.72         & 71.38         & 60.28          & 81.90           & 38.51          & 90.79          & 52.38          & 84.72          & 48.94          & 87.30           & 56.97               & 83.22              \\
ODIN  \cite{odin}         & 76.86         & 71.74         & 68.08          & 77.77          & 36.12          & 91.16          & 49.25          & 89.01          & 46.48          & 88.23          & 55.36               & 83.58              \\
GradNorm  \cite{gradnorm}         & 78.24         & 71.90          & 79.50           & 74.02          & 32.01          & \textbf{93.89} & \textbf{43.24} & \textbf{92.05} & 68.46          & 84.83          & 60.29               & 83.34              \\
Energy  \cite{energy}            & 76.53         & 72.08         & 60.61          & 79.70           & 31.33          & 90.63          & 45.77          & 88.70           & 38.07          & 89.06          & 50.46               & 84.03              \\
\rowcolor{gray!20}
LogitGap & 75.48         & 72.51         & \textbf{53.64} & \textbf{82.25} & \textbf{26.68} & 92.24          & 47.11          & 87.15          & \textbf{35.10}  & 89.11          & \textbf{47.60}      & \textbf{84.65}   \\
\bottomrule

\end{tabular}
\label{tab:IN-tradition-rn50}
\end{table}

\subsection{Results on Far-OOD Evaluation with ImageNet as ID Dataset in Zero-Shot OOD Detection}
In the main paper, to isolate the effect of semantic shift, we construct a challenging OOD detection benchmark specifically designed for semantic shift evaluation. This benchmark is built using ImageNet \cite{imagenet} as ID dataset, while NINCO \cite{ninco}, ImageNet-O \cite{imagenet-ao}, and ImageNet-OOD \cite{ImageNetOOD} as OOD datasets, enabling a more accurate assessment of the model’s ability to detect semantic-level distribution changes.

Moreover, following the OpenOOD \cite{openood} evaluation protocol, we assess the performance of OOD detection methods in the far-OOD scenario under a zero-shot setting. In the far-OOD scenario, ImageNet is used as the ID dataset, while iNaturalist \cite{inaturalist}, OpenImage-O \cite{vim}, and Textures \cite{texture} serve as the OOD datasets.
 As shown in Table \ref{tab:IN-farood}, LogitGap maintains robust performance under the broader distributional shift.

\begin{table}[]
\centering
\scriptsize
\setlength{\tabcolsep}{1.0pt}
\caption{OOD Detection performance on CLIP ViT-B/16 under zero-shot setting. Results are reported with ImageNet as the ID dataset in the far-OOD scenario.}
\label{tab:IN-farood}
\begin{tabular}{lcccccccccccc}
\toprule
                   & \multicolumn{9}{c}{OOD Dataset}  & \multicolumn{3}{c}{\multirow{2}{*}{AVG}} \\ 
                   & \multicolumn{3}{c}{iNaturalist} & \multicolumn{3}{c}{Textures} & \multicolumn{3}{c}{OpenImage-O} & \multicolumn{3}{c}{} \\ \cmidrule(lr){2-4} \cmidrule(lr){5-7} \cmidrule(lr){8-10} \cmidrule(lr){11-13}
Method             & FPR95 $\downarrow$   & AUROC $\uparrow$   & AUPR $\uparrow$  & FPR95 $\downarrow$   & AUROC $\uparrow$   & AUPR $\uparrow$  & FPR95 $\downarrow$   & AUROC $\uparrow$   & AUPR $\uparrow$  & FPR95 $\downarrow$   & AUROC $\uparrow$   & AUPR $\uparrow$     \\
\midrule
 & \multicolumn{12}{c}{Zero-shot}    \\

Energy \cite{energy}      & 73.86           & 87.08           & 97.28        & 92.71          & 66.08         & 94.77        & 65.60           & 85.91          & 94.55         & 77.39        & 79.69        & 95.53      \\
MCM  \cite{MCM}        & 31.49           & 94.39           & 98.80        & 58.76          & \textbf{85.84}         & \textbf{98.02}        & 40.76           & 91.99          & 96.95         & 43.67        & 90.74        & 97.92      \\
MaxLogit \cite{maxlogit}    & 56.25           & 90.47           & 97.99        & 86.45          & 71.85         & 95.66        & 51.89           & 88.93          & 95.66         & 64.86        & 83.75        & 96.44      \\

\rowcolor{gray!20} 
LogitGap & 32.54           & 94.13           & 98.74        & 58.56          & 85.73         & 97.96        & 39.12           & 92.41          & 97.11         & 43.41        & 90.76        & 97.94      \\
\rowcolor{gray!20} 
LogitGap* & \textbf{27.82}           & \textbf{94.89}           & \textbf{98.91}        & \textbf{58.17}          & {85.68}         & {97.97}        & \textbf{37.27}           & \textbf{92.66 }         & \textbf{97.19}         & \textbf{41.09}        & \textbf{91.08}        & \textbf{98.02 }    \\
\bottomrule

\end{tabular}
\end{table}

\subsection{Results on More Architectures with ImageNet as ID Dataset in Zero-Shot OOD Detection}
We extend our experiments to other CLIP variants with ImageNet as ID dataset, including ResNet-50, ResNet-101, ViT-B/32 and ViT-L/14 in zero-shot OOD detection. As shown in Table \ref{appendix_tab:IN-more-archs}, LogitGap consistently outperforms baselines across all backbones. For instance, on ViT-L/14, LogitGap reduces FPR95 by $6.03\%$ and improves AUROC by $2.65\%$ compared to MCM.
This demonstrate that LogitGap generalizes well across architectures.

\begin{table}[]
\centering
\scriptsize
\setlength{\tabcolsep}{1.0pt}
\caption{OOD Detection performance across various CLIP architectures under the zero-shot setting. Results are reported with ImageNet as the ID dataset in a semantic shift scenario.}
\label{appendix_tab:IN-more-archs}
\begin{subtable}[t]{\linewidth}
\centering
\begin{tabular}{lcccccccccccc}
\toprule
                   & \multicolumn{9}{c}{OOD Dataset}                                                              & \multicolumn{3}{c}{\multirow{2}{*}{AVG}} \\ 
                   & \multicolumn{3}{c}{NINCO} & \multicolumn{3}{c}{ImageNet-O} & \multicolumn{3}{c}{ImageNetOOD} & \multicolumn{3}{c}{}                     \\ \cmidrule(lr){2-4} \cmidrule(lr){5-7} \cmidrule(lr){8-10} \cmidrule(lr){11-13}
Method             & FPR95 $\downarrow$   & AUROC $\uparrow$   & AUPR $\uparrow$  & FPR95 $\downarrow$   & AUROC $\uparrow$   & AUPR $\uparrow$  & FPR95 $\downarrow$   & AUROC $\uparrow$   & AUPR $\uparrow$  & FPR95 $\downarrow$   & AUROC $\uparrow$   & AUPR $\uparrow$     \\
\midrule
Energy \cite{energy}             & 92.21          & 66.35          & 94.67          & 86.00          & 70.98          & 98.32          & 80.87          & 74.85          & 82.57          & 86.36          & 70.73          & 91.85          \\
MCM  \cite{MCM}               & \textbf{82.09} & 71.58          & 95.55          & 84.25          & 73.95          & 98.57          & 84.97          & 75.19          & 83.64          & 83.77          & 73.57          & 92.59          \\
MaxLogit  \cite{maxlogit}          & 88.19          & 69.42          & 95.23          & 83.95          & 72.50          & 98.41          & \textbf{78.74} & \textbf{76.64} & 83.91          & 83.63          & 72.85          & 92.52          \\
\rowcolor{gray!20} 
LogitGap & 82.65          & \textbf{73.26} & \textbf{95.87} & \textbf{83.70} & \textbf{75.44} & \textbf{98.67} & 84.01          & 76.51          & \textbf{84.35} & \textbf{83.45} & \textbf{75.07} & \textbf{92.96} \\
\bottomrule
\end{tabular}
\caption{Results on ResNet-50.}
\end{subtable}

\begin{subtable}[t]{\linewidth}
\centering
\begin{tabular}{lcccccccccccc}
\toprule
                   & \multicolumn{9}{c}{OOD Dataset}                                                              & \multicolumn{3}{c}{\multirow{2}{*}{AVG}} \\ 
                   & \multicolumn{3}{c}{NINCO} & \multicolumn{3}{c}{ImageNet-O} & \multicolumn{3}{c}{ImageNetOOD} & \multicolumn{3}{c}{}                     \\ \cmidrule(lr){2-4} \cmidrule(lr){5-7} \cmidrule(lr){8-10} \cmidrule(lr){11-13}
Method             & FPR95 $\downarrow$   & AUROC $\uparrow$   & AUPR $\uparrow$  & FPR95 $\downarrow$   & AUROC $\uparrow$   & AUPR $\uparrow$  & FPR95 $\downarrow$   & AUROC $\uparrow$   & AUPR $\uparrow$  & FPR95 $\downarrow$   & AUROC $\uparrow$   & AUPR $\uparrow$     \\
\midrule
Energy \cite{energy}  & 91.83          & 66.06          & 94.62          & 86.15          & 70.51          & 98.32          & 81.19          & 75.07          & 83.21          & 86.39          & 70.55          & 92.05          \\
MCM \cite{MCM}     & 84.13          & 70.66          & 95.38          & 83.75          & 74.06          & 98.56          & 85.19          & 74.76          & 83.11          & 84.36          & 73.16          & 92.35          \\
MaxLogit \cite{maxlogit} & 87.94          & 69.41          & 95.24          & 84.15          & 72.46          & 98.43          & 79.73          & \textbf{76.83} & \textbf{84.37} & 83.94          & 72.90          & 92.68          \\
\rowcolor{gray!20} 
LogitGap & \textbf{82.65} & \textbf{73.36} & \textbf{95.89} & \textbf{82.10} & \textbf{75.94} & \textbf{98.69} & \textbf{83.99} & 76.18          & 83.88          & \textbf{82.91} & \textbf{75.16} & \textbf{92.82} \\

\bottomrule
\end{tabular}
\caption{Results on ResNet-101.}
\end{subtable}

\begin{subtable}[t]{\linewidth}
\centering
\begin{tabular}{lcccccccccccc}
\toprule
                   & \multicolumn{9}{c}{OOD Dataset}                                                              & \multicolumn{3}{c}{\multirow{2}{*}{AVG}} \\ 
                   & \multicolumn{3}{c}{NINCO} & \multicolumn{3}{c}{ImageNet-O} & \multicolumn{3}{c}{ImageNetOOD} & \multicolumn{3}{c}{}                     \\ \cmidrule(lr){2-4} \cmidrule(lr){5-7} \cmidrule(lr){8-10} \cmidrule(lr){11-13}
Method             & FPR95 $\downarrow$   & AUROC $\uparrow$   & AUPR $\uparrow$  & FPR95 $\downarrow$   & AUROC $\uparrow$   & AUPR $\uparrow$  & FPR95 $\downarrow$   & AUROC $\uparrow$   & AUPR $\uparrow$  & FPR95 $\downarrow$   & AUROC $\uparrow$   & AUPR $\uparrow$     \\
\midrule
Energy \cite{energy}  & 87.05          & 68.75          & 94.96          & 83.80          & 72.46          & 98.39          & 79.66          & 75.12          & 82.67          & 83.50          & 72.11          & 92.01          \\
MCM \cite{MCM}     & 79.62          & 73.85          & 95.93          & 80.95          & 75.58          & 98.66          & 84.09          & 76.32          & 84.16          & 81.55          & 75.25          & 92.92          \\
MaxLogit \cite{maxlogit} & 82.27          & 71.76          & 95.53          & 80.55          & 74.32          & 98.51          & \textbf{77.38} & 76.99          & 84.06          & \textbf{80.07} & 74.36          & 92.70          \\
\rowcolor{gray!20} 
LogitGap & \textbf{79.13} & \textbf{75.66} & \textbf{96.27} & \textbf{79.15} & \textbf{77.45} & \textbf{98.77} & 82.03          & \textbf{77.68} & \textbf{84.84} & 80.10          & \textbf{76.93} & \textbf{93.29} \\

\bottomrule
\end{tabular}
\caption{Results on ViT-B/32.}
\end{subtable}

\begin{subtable}[t]{\linewidth}
\centering
\begin{tabular}{lcccccccccccc}
\toprule
                   & \multicolumn{9}{c}{OOD Dataset}                                                              & \multicolumn{3}{c}{\multirow{2}{*}{AVG}} \\ 
                   & \multicolumn{3}{c}{NINCO} & \multicolumn{3}{c}{ImageNet-O} & \multicolumn{3}{c}{ImageNetOOD} & \multicolumn{3}{c}{}                     \\ \cmidrule(lr){2-4} \cmidrule(lr){5-7} \cmidrule(lr){8-10} \cmidrule(lr){11-13}
Method             & FPR95 $\downarrow$   & AUROC $\uparrow$   & AUPR $\uparrow$  & FPR95 $\downarrow$   & AUROC $\uparrow$   & AUPR $\uparrow$  & FPR95 $\downarrow$   & AUROC $\uparrow$   & AUPR $\uparrow$  & FPR95 $\downarrow$   & AUROC $\uparrow$   & AUPR $\uparrow$     \\
\midrule
Energy \cite{energy}  & 81.88 & 76.89 & 96.51 & 77.45 & 78.45 & 98.81 & 76.95 & 78.16 & 84.61 & 78.76 & 77.83 & 93.31 \\
MCM  \cite{MCM}    & 69.48 & 79.13 & 96.74 & 68.30 & 82.48 & 99.06 & 73.99 & 80.71 & 86.60 & 70.59 & 80.77 & 94.13 \\
MaxLogit \cite{maxlogit} & 74.23 & 79.18 & 96.86 & 71.85 & 80.73 & 98.94 & 72.78 & 79.95 & 85.69 & 72.95 & 79.95 & 93.83 \\
\rowcolor{gray!20} 
LogitGap & \textbf{64.43} & \textbf{82.49} & \textbf{97.38} & \textbf{62.05} & \textbf{84.94} & \textbf{99.20} & \textbf{67.20} & \textbf{82.84} & \textbf{87.75} & \textbf{64.56} & \textbf{83.42} & \textbf{94.78} \\

\bottomrule
\end{tabular}
\caption{Results on ViT-L/14.}
\end{subtable}

\end{table}

\newpage
\clearpage

\section{More Analysis about LogitGap}
\label{SecE}
\subsection{Generalization Beyond Visual Tasks}

Although LogitGap is primarily designed for visual classification, one of the core settings for OOD detection, we further verify its generalization to non-visual tasks. Specifically, we conduct an evaluation on the ESC-50 \cite{ESC} audio classification dataset, where 50 categories are randomly divided into 25 ID and 25 OOD classes. The method is implemented with the CLAP \cite{CLAP} model, following the same hyperparameter selection strategy used in the visual domain, with $N=20\%$ of the class count ($N = 5$).

As shown in Table \ref{app_tab:audio_cls}, LogitGap maintains strong performance on this audio task, indicating that it does not rely on modality-specific architectures or features. This confirms its broad applicability across domains.
In addition, we introduce MCM\_topN, a comparative baseline that applies the top-N strategy to MCM \cite{MCM}. For fair comparison, we use the same $N$ for MCM\_topN and LogitGap. Results show that applying the top-N strategy to MCM does not consistently improve performance, which indicates LogitGap's advantage goes beyond top-N filtering.

\begin{table}[h]
\centering
\caption{OOD Detection performance on CLAP using ESC-50 dataset, where 50 classes are randomly split into 25 ID and 25 OOD classes.}
\label{app_tab:audio_cls}
\begin{tabular}{lcccc}
\toprule
          & FPR95 $\downarrow$ & AUROC $\uparrow$ & AUPR $\uparrow$ &  \\
\midrule
MCM \cite{MCM}      & 26.40               & 94.86            & 95.35           &  \\
MaxLogit \cite{maxlogit} & 40.60               & 92.32            & 92.51           &  \\
Energy \cite{energy}   & 42.10               & 91.76            & 92.09           &  \\
MCM\_topN & 27.90               & 94.69            & 95.21           &  \\
GEN  \cite{GEN}     & 24.10              & 95.55            & 95.96           &  \\
\rowcolor{gray!20} 
LogitGap  & \textbf{17.50}      & \textbf{96.15}   & \textbf{96.48}  &  \\
\bottomrule
\end{tabular}
\end{table}

\subsection{Relationship Between LogitGap and Other Logit-Pattern-Based Methods}

We review related works in both active learning and OOD detection, and summarize several representative methods in Table~\ref{app_tab:logit-pattern-comparision}.
Here, $z_c$ denotes the logit and $p_c$ represents the predicted probability for $c$-th class.
Specifically, we compare LogitGap with three classical active learning approaches: LC \cite{LC}, Margin \cite{margin_sampling}, and Entropy \cite{entropy}; and four representative OOD detection methods: MSP \cite{msp}, MaxLogit\cite{maxlogit}, DML \cite{DML}, and GEN \cite{GEN}.

Among these methods, Margin Sampling is the most closely related to ours, as it also measures a margin between model outputs.
However, two key distinctions set LogitGap apart:
(1) Representation Level: Margin Sampling operates on predicted probabilities, whereas LogitGap computes directly on raw logits, avoiding softmax-induced normalization effects.
(2) Scope of Comparison: Margin Sampling considers only the top-2 predictions, while LogitGap generalizes this to the top-N logits, enabling a more holistic uncertainty estimation.
\begin{table}[h]
\centering
\caption{Comparison of logit-pattern-based methods, ``OOD'' and ``AL'' represent OOD detection and active learning respectively.}
\label{app_tab:logit-pattern-comparision}
\begin{tabular}{lcc}
\toprule
Method                   & Task            & Equation                                                                                                \\
\midrule
MSP \cite{msp}                  & OOD   & $\max_c p_c$   \\
MaxLogit \cite{maxlogit}                 & OOD   & $\max_c z_c$   \\
DML \cite{DML}         & OOD   & $\max_c \lambda \hat z_c+||z_c||$, $z_c=\hat z_c\cdot||z_c||$   \\
GEN \cite{GEN}                  & OOD   & $-\sum_{c=1}^M p^\gamma_{c} (1 - p_{c})^\gamma$, and $ p_{1}\geq \cdot\cdot\cdot \geq p_{M} \geq \cdot\cdot\cdot \geq p_{N} , \gamma \in (0,1)$ \\
Margin \cite{margin_sampling}  & AL & $ p_1 - p_2$ and $p_1 \geq p_2 ... \geq p_N$      \\
LC \cite{LC}            & AL & $\max_c 1- p_{c}$    \\
Entropy \cite{entropy}      & AL & $-\sum_cp_c\cdot \log p_c$       \\
LogitGap                       & OOD   & $\frac 1{M-1} \sum_{c=2}^M{z_1 - z_c}$ and $z_1\geq \cdot\cdot\cdot\geq z_M \geq \cdot\cdot\cdot\geq z_N$     \\       
\bottomrule
\end{tabular}
\end{table}

To emphasize the importance of these distinctions, we adapt Margin Sampling for OOD detection under zero-shot setting using CLIP ViT-B/16, with ImageNet as the ID dataset in a semantic shift scenario.
As shown in Table~\ref{app_tab:IN-margin}, LogitGap consistently outperforms Margin Sampling across all benchmarks, demonstrating the effectiveness and robustness of our formulation.

\begin{table}[]
\centering
\scriptsize
\setlength{\tabcolsep}{1.0pt}
\caption{OOD Detection Performance on CLIP ViT-B/16 under Zero-shot Setting. Results are reported with ImageNet as the ID dataset in a semantic shift scenario.}
\label{app_tab:IN-margin}
\begin{tabular}{lcccccccccccc}
\toprule
                   & \multicolumn{9}{c}{OOD Dataset}                                                              & \multicolumn{3}{c}{\multirow{2}{*}{AVG}} \\ 
                   & \multicolumn{3}{c}{NINCO} & \multicolumn{3}{c}{ImageNet-O} & \multicolumn{3}{c}{ImageNetOOD} & \multicolumn{3}{c}{}                     \\ \cmidrule(lr){2-4} \cmidrule(lr){5-7} \cmidrule(lr){8-10} \cmidrule(lr){11-13}
Method             & FPR95 $\downarrow$   & AUROC $\uparrow$   & AUPR $\uparrow$  & FPR95 $\downarrow$   & AUROC $\uparrow$   & AUPR $\uparrow$  & FPR95 $\downarrow$   & AUROC $\uparrow$   & AUPR $\uparrow$  & FPR95 $\downarrow$   & AUROC $\uparrow$   & AUPR $\uparrow$     \\
\midrule
Margin Sampling & 89.88   & 68.40      & 94.97       & 89.45   & 67.83   & 98.14   & 89.96   & 67.49   & 77.67   & 89.76   & 67.91   & 90.26   \\
\rowcolor{gray!20} 
LogitGap        & \textbf{77.42} & \textbf{76.51} & \textbf{96.38} & \textbf{71.95} & \textbf{81.45} & \textbf{99.03} & \textbf{75.40} & \textbf{80.27} & \textbf{86.21} & \textbf{74.92} & \textbf{79.41} & \textbf{93.87}  \\
\bottomrule
\end{tabular}
\end{table}

\subsection{The Effectiveness of Synthetic OOD Data}
As described in Section \ref{syn_OOD}, we propose an OOD data synthesis strategy to adaptively select the hyperparameter $N$.
Since the synthesized OOD data are derived from in-distribution (ID) information, they may not fully capture the characteristics of real-world OOD data. Nevertheless, our empirical findings indicate that such synthetic samples are sufficiently informative for selecting a robust hyperparameter $N$.
As shown in Table \ref{app_tab:syn_ood}, the optimal $N$ determined using synthetic OOD samples (\ie, $N_{\text{syn}}$) is highly consistent with the one obtained from multiple real OOD datasets (\ie, $N_{\text{real}}$), achieving comparable detection performance. These results validate the practicality of the $N$-selection strategy, even in the absence of real OOD data.

\begin{table}[]
\centering
\caption{ $N$ selected using synthetic and real OOD samples on ViT-B/16 with ImageNet as ID dataset. $N_{\text{syn}}$ and $N_{\text{real}}$ denote $N$ selected using synthetic OOD and real OOD samples, respectively.}
\label{app_tab:syn_ood}
\scriptsize
\begin{tabular}{lccccccccc}
\toprule
& \multicolumn{3}{c}{NINCO} & \multicolumn{3}{c}{ImageNet-O} & \multicolumn{3}{c}{ImageNetOOD}     \\
 \cmidrule(lr){2-4} \cmidrule(lr){5-7} \cmidrule(lr){8-10} 
& FPR95 $\downarrow$   & AUROC $\uparrow$ & $N$     & FPR95 $\downarrow$   & AUROC $\uparrow$ & $N$    & FPR95 $\downarrow$   & AUROC $\uparrow$ & $N$     \\
\midrule
$N_{syn}$   & 77.42 & 76.51 & 88    & 71.95      & 81.45 & 88    & 75.40    & 80.27 & 88    \\
$N_{real}$  & 77.22 & 76.51 & 100   & 71.60      & 81.50 & 110   & 75.39    & 80.27 & 90    \\
\bottomrule
\end{tabular}
\end{table}

\subsection{The Impact of Hyperparameter N on LogitGap Performance}
In Table \ref{tab:N-performance}, we provide a more ablation study on hyparameter $N$. Specifically, we conduct experiments under the zero-shot OOD detection setting, using either ImageNet-100 or ImageNet-1K as ID dataset. For simplicity, we report the FPR95 of our LogitGap method based on CLIP ViT-B/16 model. We can observe that: (1) Optimal $N$ varies by dataset (e.g., 19 for ImageNet-100, 195 for ImageNet-1K); (2) Setting $N$ to $20\%$ of total classes consistently provides strong performance. Therefore, we adopt this value as default in LogitGap.

\begin{table}
\centering
\caption{Effect of hyperparameter $N$ on FPR95 using ViT-B/16 under zero-shot setting.}
\label{tab:N-performance}
\scriptsize
\begin{subtable}[t]{\linewidth}
\centering
\begin{tabular}{lccccccccccc}
\toprule
 & 5     & 95    & 195   & 295   & 395   & 495   & 595   & 695   & 795   & 895  &995 \\
\midrule
NINCO           & 83.26 & 77.34 & 76.81 & 76.56 & 76.40 & 76.74 & 77.00 & 77.53 & 78.29 & 78.68 & 79.65 \\
ImageNet-O      & 81.80 & 71.85 & 72.45 & 73.20 & 73.40 & 73.55 & 73.95 & 74.30 & 74.85 & 75.50 & 75.90 \\
ImageNetOOD     & 82.49 & 75.47 & 76.38 & 77.16 & 77.64 & 78.26 & 78.74 & 79.19 & 79.72 & 80.31 & 81.04 \\
\bottomrule
\end{tabular}
\caption{ImageNet as ID dataset.}
\end{subtable}
\begin{subtable}[t]{\linewidth}
\centering
\begin{tabular}{lccccccccccc}
\toprule
 & 1     & 9     & 19    & 29    & 39    & 49    & 59    & 69    & 79    & 89   &   99 \\
 \midrule
NINCO           & 75.54 & 46.27 & 42.77 & 41.88 & 41.85 & 42.6 & 43.23 & 44.49 & 46.31 & 47.64 & 50.58 \\
ImageNet-O      & 76.00 & 45.70 & 43.50 & 44.25 & 44.30 & 45.25 & 45.75 & 46.45 & 47.90 & 47.90 & 48.65 \\
ImageNetOOD     & 76.02 & 46.86 & 46.19 & 46.62 & 47.10 & 47.80 & 48.33 & 49.32 & 50.51 & 50.57 & 51.53 \\
\bottomrule
\end{tabular}
\caption{ImageNet-100 as ID dataset.}
\end{subtable}
\end{table}


\end{document}